\documentclass[draftcls,onecolumn,journal]{IEEEtran}
\usepackage{amsmath,amsfonts}
\usepackage{array}
\usepackage{textcomp}
\usepackage{stfloats}
\usepackage{url}
\usepackage{verbatim}
\usepackage{graphicx}
\usepackage{cite}
\usepackage{soul}
\input{TIT.sty}

\newcommand{\lp}{\left(}
\newcommand{\rp}{\right)}
\newcommand{\lb}{\left[}
\newcommand{\rb}{\right]}
\newcommand{\lbp}{\left\{}
\newcommand{\rbp}{\right\}}
\newcommand{\lba}{\left\lvert}
\newcommand{\rba}{\right\rvert}

\newcommand{\mcal}{\mathcal}

\newcommand{\mbb}{\mathbb}
\newcommand{\mrm}{\mathrm}

\newcommand{\lce}{\left\lceil}
\newcommand{\rce}{\right\rceil}
\newcommand{\lfl}{\left\lfloor}
\newcommand{\rfl}{\right\rfloor}
\makeatletter
\newcommand{\vast}{\bBigg@{3}}
\newcommand{\Vast}{\bBigg@{4}}
\makeatother

\newcommand{\E}{\mathbb{E}}

\renewcommand{\Pr}{\mathbb{P}}

\newcommand{\Indc}[1]{\mathds{1}\lbp #1\rbp}

\newcommand{\argmin}{\mathop{\mathrm{argmin}}}
\newcommand{\argmax}{\mathop{\mathrm{argmax}}}

\makeatletter
\newcommand*{\indep}{%
  \mathbin{%
    \mathpalette{\@indep}{}%
  }%
}
\newcommand*{\nindep}{%
  \mathbin{
    \mathpalette{\@indep}{\not}
  }%
}
\newcommand*{\@indep}[2]{%
  \sbox0{$#1\perp\m@th$}
  \sbox2{$#1=$}
  \sbox4{$#1\vcenter{}$}
  \rlap{\copy0}
  \dimen@=\dimexpr\ht2-\ht4-.2pt\relax
  \kern\dimen@
  {#2}%
  \kern\dimen@
  \copy0 
} 
\makeatother


\newtheorem{theorem}{Theorem}
\newtheorem{corollary}{Corollary}

\newtheorem{remark}{Remark}

\newtheorem{property}{Property}


\newlist{philist}{itemize}{1}
\setlist[philist]{label=\textbf{$\Phi_{1}$: }}

\hyphenation{op-tical net-works semi-conduc-tor IEEE-Xplore}

\usepackage{multirow}


\begin{document}

\title{Detection Augmented Bandit Procedures for Piecewise Stationary MABs: A Modular Approach}

\author{Yu-Han Huang, Argyrios Gerogiannis,~\IEEEmembership{Graduate Student Member,~IEEE}, Subhonmesh Bose,~\IEEEmembership{Member,~IEEE}, Venugopal V. Veeravalli,~\IEEEmembership{Fellow,~IEEE}
\thanks{This work  was supported in part by a grant from the C3.ai Digital Transformation Institute, and in part by the Army Research Laboratory under Cooperative Agreement W911NF-17-2-0196, through the University of Illinois at Urbana-Champaign.}
\thanks{The authors are with the ECE department and the Coordinated Science Laboratory, Grainger College of Engineering, University of Illinois at Urbana-Champaign. Email:\{yuhanhh2,ag91,boses,vvv\}@illinois.edu.}

}

\IEEEpubid{0000--0000~\copyright~2023 IEEE}

\maketitle

\begin{abstract}
Conventional Multi-Armed Bandit (MAB) algorithms are designed for stationary environments, where the reward distributions associated with the arms do not change with time. In many applications, however, the environment is more accurately modeled as being non-stationary. In this work, piecewise stationary MAB (PS-MAB) environments are investigated, in which the reward distributions associated with a subset of the arms change at some change-points and remain stationary between change-points.  Our focus is on the asymptotic analysis of PS-MABs, for which practical algorithms based on change detection have been previously proposed. Our goal is to modularize the design and analysis of such Detection Augmented Bandit (DAB) procedures. To this end, we first provide novel, improved performance lower bounds for PS-MABs. Then, we identify the requirements for stationary bandit algorithms and change detectors in a DAB procedure that are needed for the modularization. 
We assume that the rewards are sub-Gaussian. 
Under this assumption and a condition on the separation of the change-points, we show that the analysis of DAB procedures can indeed be modularized, so that the regret bounds can be obtained in a unified manner for various combinations of change detectors and bandit algorithms. Through this analysis, we develop new modular DAB procedures that are order-optimal.  Finally, we showcase the practical effectiveness of our modular DAB approach in our experiments, studying its regret performance compared to other methods and investigating its detection capabilities.
\end{abstract}

\begin{IEEEkeywords}
Non-stationary bandits, piecewise stationary bandits, dynamic regret, sequential change detection, restarting based algorithms.
\end{IEEEkeywords}

\section{Introduction}
\label{sec:intro}
\IEEEPARstart{I}{n} the Multi-Armed Bandit (MAB) problem, an agent chooses an arm $a_t$ from a finite set $\lb A \rb \subset \mbb{N}$ at each time $t \in \mbb{N}$ and obtains a stochastic  reward $X_{a_t,t}$, i.i.d. across time. We use  $\mbb{N}$ to denote the set of natural numbers and $\lb n \rb:=\lbp 1, \dots, n \rbp$ for $n \in \mbb{N}$. The agent employs a policy adapted to the history of actions and observations up to that point,
with an aim to maximize  (expected) accumulated reward over a horizon of length $T$. 
See \cite{lattimore2020bandit,slivkins2019introduction} for recent books on the topic. A variety of engineering decision-making problems can be mapped to a bandit problem instance, e.g., in recommendation systems \cite{li2010recomm,lefortier2014recomm}, online advertising \cite{chapelle2011advr,sertan2012advr,schwartz2017onlineadvr}, dynamic pricing \cite{tajik2024dynamicpr} and real-time bidding \cite{flajolet2017bidding}.
In the most common setting of the MAB problem, the reward distributions of $X_{a,t}$'s associated with all arms $a \in [A]$ remain unchanged over $T$. Such stationarity rarely holds in practice. For example,  in the context of recommendation systems, preferences of users can change over time due to changing fashions and trends. This observation has spurred interest in the analysis of nonstationary MABs; see \cite{cai2017nsbid,lu2019nsbid2,chen2020trafficns} for examples.

As an initial step towards addressing non-stationary MABs, multiple prior works have focused on piecewise stationary MABs (PS-MABs) \cite{kocsis2006discounted} in which the reward distributions associated with a subset of the arms change across $N_T$ change-points at $\nu_{0}\coloneqq 1 < \nu_1 < \ldots < \nu_{N_T} < \nu_{N_{T} + 1} = T + 1$. Over the $k^{\mathrm{th}}$ interval $\lbp \nu_{k-1}, \dots, \nu_{k} - 1 \rbp$, the reward distributions for all arms remains the same where $\lp X_{a,t}\rp_{t=\nu_{k-1}}^{\nu_{k}-1}$ are assumed to be $\sigma^2$-sub-Gaussian\footnote{A random variable $X$ is said to be $\sigma^{2}$-sub-Gaussian if its cumulant generating function $\phi_{X}$ is upper bounded by that of a Gaussian random variable with mean $0$ and variance $\sigma^{2}$, i.e., for any $\theta\in\mbb{R}$, $
\phi_{X}\lp\theta\rp\leq\frac{\sigma^{2}\theta^{2}}{2}$.} with mean $\mu_{a, k}$. Across the change-point $\nu_{k}$, at least one of the arms experiences a mean-shift, i.e., $\max_{a \in [A]} \lba \mu_{a, k + 1} - \mu_{a, k} \rba >0$. Then, an agent in an PS-MAB environment seeks a causal control policy that minimizes the (dynamic) regret over horizon $T$, defined as 
\begin{equation}
R_{T}\coloneqq\E\lb\sum_{k=1}^{N_{T} + 1}\sum_{t=\nu_{k-1}}^{\nu_{k}-1} \left( \max_{a \in [A]} \mu_{a, k} - \mu_{a_t, k} \right) \rb.\label{eq:regret}
\end{equation}  
The PS model has been argued to be a good approximation for many real-world environments in  \cite{auer2002using,seznec2020single}.

An algorithm designed for a stationary MAB environment (henceforth referred to as \emph{stationary algorithm}), such as the Upper Confidence Bound (UCB) algorithm has little chance to succeed against an PS-MAB instance. To see why, notice that a stationary algorithm must quickly identify an `optimal arm' within an interval and refrain from pulling sub-optimal arms too often in order to minimize regret over that interval. By pulling these pre-change sub-optimal arms infrequently, the algorithm will fail to quickly identify mean-shifts in their rewards. This failure will adversely affect regret when such arms become optimal following a change-point.
In a nutshell, blindly running a stationary algorithm results in poor performance in an PS-MAB environment; one must suitably respond to changes. In this paper, we study an algorithmic framework for control design in PS-MAB environments that combines the extensive literature on stationary bandit algorithms (e.g., see \cite{auer2002finite,audibert2009minimax,lattimore2020bandit}) and (quickest) change detection theory (e.g., see \cite{tartakovsky-veeravalli-2005,Poor_Hadjiliadis_2008,vvv_qcd_overview}) in a modular framework.

Observations from PS-MAB environments prior to a change-point do not necessarily follow the reward distributions after it. Hence, control policies for PS-MABs must devise mechanisms to somehow forget such observations. There are two main approaches to forgetting. The first among these weigh the information from past observations `less' than those from recent arm pulls, forcing the algorithm to continuously adapt to the changing environment. Examples include the Discounted-UCB (D-UCB) algorithm 
in \cite{kocsis2006discounted,garivier2011upper}
and the Sliding-Window UCB (SW-UCB) algorithm in \cite{garivier2011upper}.
These methods require one to tune hyper-parameters such as a discount factor in D-UCB and a window-size in SW-UCB,  based on prior knowledge of the number of change-points that can occur over a finite horizon. Such information may not always be available in practice.

The second approach involves resetting the observations of rewards from some or all of the arms at specific time-points. One can predefine a schedule for such restarts, if the rate of changes is roughly known, e.g., as in \cite{besbes_mab2014}. Alternately, one can restart (stationary) learning, whenever a change is detected. Such algorithms essentially combine elements of a stationary bandit algorithm with that of a change detector used on an observation sequence or a function thereof. These detection-augmented bandit (DAB) algorithms can attempt to detect when the optimal arm might have changed to trigger a reset, e.g., as in \cite{abbasi2023new}. Alternately, they can restart whenever a change is detected in the reward distribution from any of the arms, e.g., as in \cite{liu2018change,cao2019nearly,besson2022efficient}. Upon detecting a change in the reward distribution from a subset of the arms, one can choose to forget past observations from only those arms as in \cite{besson2022efficient} or trigger a global reset of observations from all arms, e.g., as in \cite{liu2018change,cao2019nearly,besson2022efficient}.

\begin{figure}
    \centering
    \includegraphics[width=0.5\linewidth]{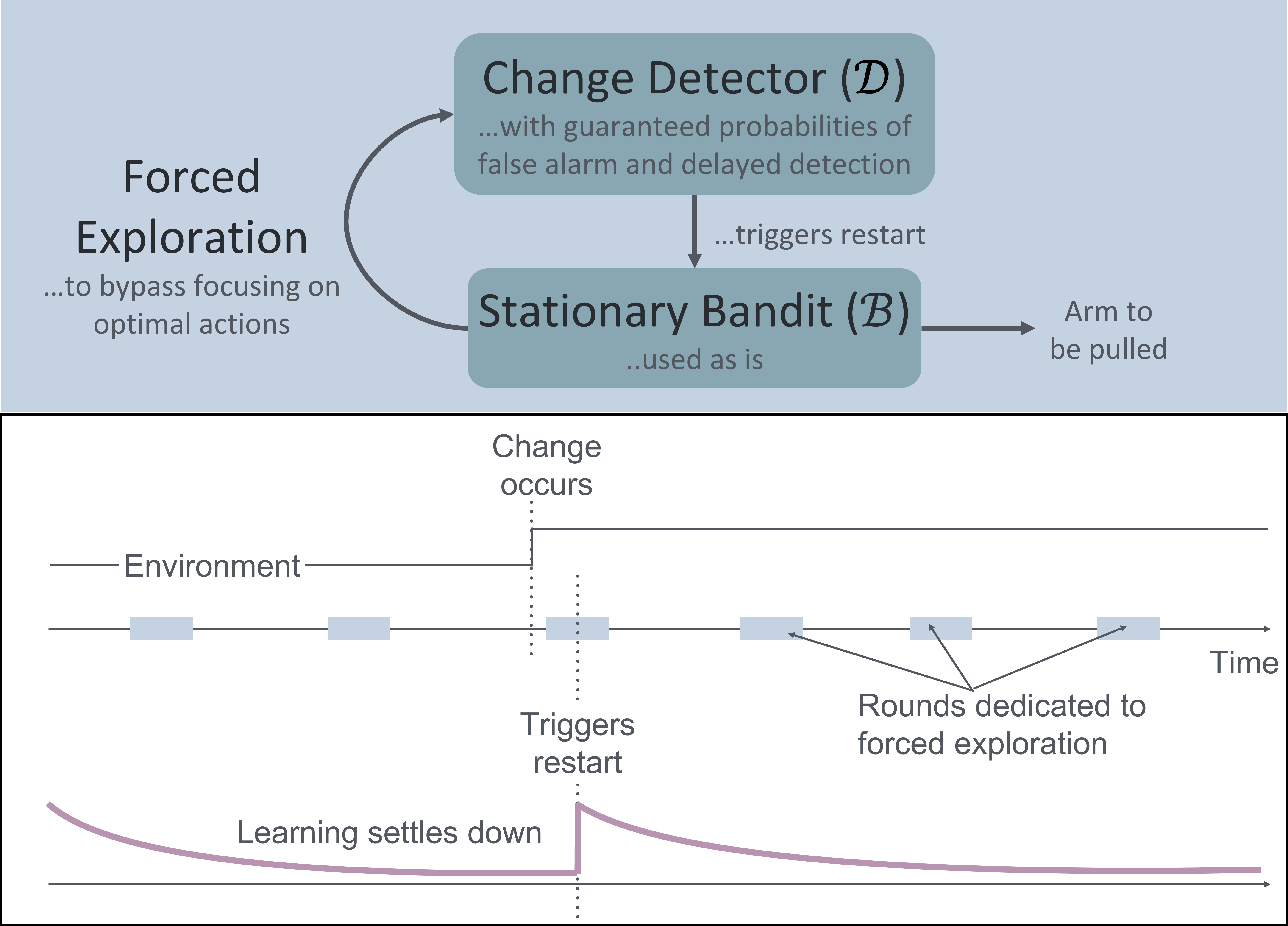}
    \caption{The general DAB procedure.}
    \label{fig:DAB}
\end{figure}
In this paper, we study how one can combine \emph{any} stationary bandit algorithm $\mcal{B}$ and \emph{any} change detector algorithm $\mcal{D}$, provided they satisfy certain properties, to obtain a DAB algorithm for PS-MABs. In addition to providing a modular algorithm design, we also provide a modular regret analysis that utilizes the regret growth of $\mcal{B}$ to produce a regret rate for our DAB procedure. More precisely, order-optimal $\mcal{O}\lp \sqrt{AT\log(T)} \rp$ (instance-independent) regret growth for $\mcal{B}$ yields $\mcal{O}\lp \sqrt{A N_T T\log(T)} \rp$ regret growth for DAB procedures\footnote{We use the notation $f \lesssim g$ to denote $f = \mcal{O}\lp g \rp$, $f \gtrsim g$ to denote $f = \Omega\lp g \rp$, and $f \simeq g$ to denote $f = \Theta\lp g \rp$ for arbitrary functions $f$ and $g$, where we borrow the standard notions for $\mcal{O}$ $\Theta$, and $\Omega$.}. For reasonable choices of $\mcal{B}$, such as the UCB and klUCB algorithms, suboptimal arms are pulled infrequently at a rate of $\mcal{O}\lp\log\lp T\rp\rp$ over a horizon of length $T$, per \cite{lattimore2020bandit}. In other words, $\mcal{B}$ learns an optimal arm quickly and neglects sub-optimal arms over time. However, this sublinearity limits the number of reward samples observed from suboptimal arms, making it harder for a change detector $\mcal{D}$ to identify possible changes in their reward distributions. Thus, our DAB procedure involves  \textit{forced exploration} of \emph{all} arms to effectively respond to changes as illustrated in Figure \ref{fig:DAB}. The rate of such an exploration is carefully controlled to limit how it affects the regret over a stationary interval, but also to reduce delays in detection after a change point or the possibilities of false alarms.

We now discuss prior algorithms for PS-MABs to better contextualize our work. Adapt-EVE in \cite{hartland2006multi} combines UCB with a Page-Hinkley change-detection test \cite{page1954continuous}, CuSum-UCB in \cite{liu2018change} combines UCB with a two-sided CuSum test from \cite{lorden1971procedures}, and Monitored UCB (M-UCB)  in \cite{cao2019nearly} combines UCB with a test that declares a change once the absolute difference between the empirical means of two windows surpasses some threshold. We remark that these algorithms require tunable parameters that require knowledge of the number of change-points or the magnitude of changes to achieve optimal performance. In contrast, the GLR-klUCB procedure proposed by \cite{besson2022efficient} and the the Adaptive Switching procedure (Adswitch) proposed by \cite{auer2019achieving} do not require such a priori knowledge of the non-stationarity, except GLR-klUCB requires conditions on the separation between change-points. However, AdSwitch's computational complexity is much higher than the former's and performs worse than stationary alternatives in experiments \cite{besson2022efficient}. Perhaps the common thread in all these prior works is that they combine a specific $\mcal{B}$ with a specific $\mcal{D}$. Our framework is \emph{modular} in that we allow any combination of $\mcal{B}$ and $\mcal{D}$, and provide a modular analysis of its regret under certain conditions. The art lies in utilizing controlled forced exploration in a way that allows us to disentangle the common history of reward observations that both $\mcal{B}$ and $\mcal{D}$ rely upon. We emphasize that while our work was partly motivated to leverage the long literature on change detection theory, we found that the performance metrics used in that literature are somewhat different from what $\mcal{D}$ requires to be effective in our DAB framework. In other words, identification of the requirements on $\mcal{D}$ (using the recent results in \cite{huang2025sequential}) stands as an important contribution of this work.  We show that the popular Generalized Likelihood Ratio (GLR) test and a Generalized Shiryaev-Roberts (GSR) test satisfy these requirements.

A prior modular paradigm is MASTER \cite{wei2021non}, which is designed as a black-box wrapper for a wide range of non-stationary reinforcement learning settings, including the PS-MAB case, and does not require prior knowledge of the non-stationarity. However, applying MASTER is not straightforward: its assumptions require additional analysis for each specific input algorithm $\mathcal{B}$, and in fact these assumptions have only been verified for UCB \cite{auer2002finite}. This requirement undermines modularity and prevents the straightforward substitution of algorithms. In contrast, our framework does not impose such restrictions and allows the use of any order-optimal $\mathcal{B}$ without additional proofs. Another key difference is that MASTER employs a fixed non-stationarity detection procedure, while our approach is compatible with a variety of change-detection procedures. Finally, perhaps the most significant drawback of MASTER is its practical behavior: as shown in \cite{gerogiannis2024ipfnsrlopt}, its detection test fails to trigger for realistic choices of the horizon (less than 143 billion). Empirical comparisons further demonstrate that DAB procedures consistently outperform MASTER \cite{gerogiannis2024ipfnsrlopt}, as additionally corroborated by our experiments in Section \ref{sec:exp}.

To make this article self-contained, we include universal lower bounds on regret for PS-MABs, which sharpen known lower bounds for PS-MABs in the instance-independent setting, as well as provide a novel instance-dependent lower bound. In doing so, we show the different roles played by instance-dependent and instance-independent bounds and pose an open question. Our analysis of DAB procedures is itself modular in that the regret upper bound cleanly separates into terms that arise from specific considerations---possible false alarms, (controlled) delays in detection, regret of the stationary algorithm $\mcal{B}$,  and forced exploration. In contrast to prior art \cite{liu2018change,cao2019nearly,auer2019adswitch,besson2022efficient}, we avoid `mixing' the analyses of $\mcal{B}$ and $\mcal{D}$ to produce regret bounds for DAB procedures.
Our procedures, as will become clear in the sequel, are designed to be prior-free and require minimal parameter tuning. The regret upper bound for DAB procedures in Theorem \ref{thm:det_reg} is order-optimal, as certified by our instance-independent lower bound for PS-MABs. This bound is derived under a condition on the separation between change points, similar to Assumption 4 in \cite{besson2022efficient}. We remark that such a separation makes \emph{all} changes detectable with high probability. Our simulations in Section \ref{sec:exp} reveal that such a condition is not required for the algorithm to perform well as it tends to detect subsequent changes, even when the assumptions are violated and the algorithm fails to detect a particular change-point. In addition to enjoying order-optimality and offering a modular analysis, we empirically demonstrate that the modular DAB procedures achieve regret performance commensurate with other methods proposed in the literature, depending on the choices of $\mathcal{B}$ and $\mathcal{D}$. We point out that our DAB procedure restarts $\mcal{B}$ when $\mcal{D}$ detects a change in the reward distribution in \emph{any} arm. We do not prescribe waiting to see if the optimal arm has changed to trigger a restart because  solving the question of best arm identification post-change is order-wise similar to that of learning to control in a stationary MAB environment. In addition, we discard the history of observations of all arms when a change in reward distributions is detected in any arm. One hopes to perform better through selective resets in said history of a subset of arms only if one has prior information on the nature of changes one can expect.\footnote{As shown in prior work \cite{besson2022efficient}, selectively resetting the histories of a subset of arms rather than globally, does not offer a clear practical advantage in truly prior-free settings and can sometimes be worse than global resetting.} In a prior-free setting, the context we design our DAB procedures for, our style of resets still yields order-optimal regret in theory and comparable or better performance than state-of-art in simulations.

\section{Lower Bounds on Regret Accumulation for PS-MABs}

\label{sec:prob}

Our goal in this paper is an order-optimal modular algorithmic framework for PS-MABs. We begin by presenting a lower bound on regret accumulation for any reasonable algorithm. Throughout this paper, we adopt the following convention for the mean rewards of various arms. Assume that the suboptimality gaps, $\Delta_{a, k}\coloneqq \max_{a'\in \lb A \rb}\mu_{a', k}- \mu_{a, k}$ and the change gaps $\Delta_{\mrm{c}, k}=\max_{a\in[A]}\lvert\mu_{a, k + 1}-\mu_{a, k}\rvert$ over the intervals $k \in \lb N_T\rb$ are uniformly bounded as
\begin{align}
    \Delta_{a,k} \leq C, \; \Delta_{\mrm{c},k}\geq\underline{\Delta}_{\mrm{c}}.
\end{align}
Thus, we limit the maximum extent of per-step regret from pulling any arm and the minimum degree of mean-shifts from at least one arm across a change-point. The first assumption is standard in the literature on stationary bandits; without it, the worst-case regret can be infinite. The second one ensures that the PS-MAB is nontrivial and the changes are large enough to both warrant and allow detection. We assume that the agent has no knowledge about $C$ and $\underline{\Delta}_{\mrm{c}}$, but they feature in our regret bounds.

%



With this notation, we now present two types of lower bounds on asymptotic regret growth. The first type explicitly depend on the suboptimality gaps of bandit instances and are referred to as \emph{instance-dependent} bounds. The second kind do not explicitly depend on said gaps and are known as \emph{minimax} or instance-independent bounds. We remark that Theorem 3 of \cite{garivier2011upper} provides a loose minimax regret lower bound of the order $\sqrt{T}$, which does not depend on the number of changes $N_{T}$. Proposition 4 of \cite{seznec2020single} provides a minimax  lower bound of the order $\sqrt{N_{T}AT}$ for a subclass of PS-MABs, in which the mean reward of each arm decreases with time. We present the following two results on instance-dependent and minimax lower bounds on regret for general PS-MABs. Without loss of generality and to simplify the notation, we assume that $T$ is divisible by $N_{T} + 1$ when establishing the lower bounds. The proofs are included  in Appendix \ref{sec:proof_lower_bound}.

%

\begin{theorem}\label{thm:reg_low_inst_dep}
For any arbitrary PS-MAB procedure with sublinear regret, i.e., $R_{T} \leq cT^{p}$ for all $T\in \mbb{N}$ for some $c >0$ and $p \in \lb 0, 1\rp$ in any PS-MABs, there exists a PS-MAB instance with at most $N_{T}$ changes and suboptimality gaps greater than $\Delta$ (i.e., $\Delta_{a,k} \geq \Delta$ for any $a \in \lb A \rb$ and $k \leq N_{T}$) such that 
%
\begin{equation} \label{eq:reg_low_inst_dep}
   R_{T} \geq \frac{\sigma^{2}}{2\Delta} \lp N_{T}+1 \rp \lp A-1 \rp \lb \log  \frac{\Delta}{8c} + \lp 1 - p \rp \log T - \log \lp N_{T} + 1 \rp \rb.
\end{equation}
which implies that $R_{T}=\Omega(AN_T\log(T/N_T))$.
\end{theorem}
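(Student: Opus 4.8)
The plan is to combine a block decomposition of the horizon with a change-of-measure argument applied separately on each block and each suboptimal arm, using the sublinear-regret hypothesis $R_{T}\le cT^{p}$ in place of the usual asymptotic consistency assumption. First I would split $\lb T\rb$ into $N_{T}+1$ consecutive blocks of equal length $L\coloneqq T/(N_{T}+1)$ and construct a base instance $\mcal{I}_{0}$ in which, on block $k$, a designated arm $a_{k}^{*}$ has mean $\Delta$ and every other arm has mean $0$, choosing the sequence so that $a_{k}^{*}\neq a_{k+1}^{*}$ and each block boundary is a genuine change-point. All rewards are taken Gaussian with variance $\sigma^{2}$ (hence $\sigma^{2}$-sub-Gaussian), so the suboptimality gaps are exactly $\Delta$ and $\mcal{I}_{0}$ has precisely $N_{T}$ changes, making it an admissible instance.

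Next, for each block $k$ and each suboptimal arm $a\neq a_{k}^{*}$, I would introduce a companion instance $\mcal{I}_{k,a}'$ identical to $\mcal{I}_{0}$ except that the mean of arm $a$ during block $k$ is raised to $2\Delta$, making $a$ the unique optimal arm of that block while keeping all gaps $\ge\Delta$. Since this modification is confined to the interior of block $k$, it creates no new change-points, so $\mcal{I}_{k,a}'$ is again an admissible PS-MAB with $N_{T}$ changes and the hypothesis $R_{T}\le cT^{p}$ applies to it as well. Letting $N_{a,k}$ denote the number of pulls of arm $a$ inside block $k$ and $P_{0},P_{1}$ the laws of the full interaction history under $\mcal{I}_{0}$ and $\mcal{I}_{k,a}'$, the divergence decomposition localizes to this single arm--block pair, giving $\KLD{P_{0}}{P_{1}}=\E_{0}\lb N_{a,k}\rb\cdot\KLD{\mcal{N}(0,\sigma^{2})}{\mcal{N}(2\Delta,\sigma^{2})}=\E_{0}\lb N_{a,k}\rb\cdot\frac{2\Delta^{2}}{\sigma^{2}}$.

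The core estimate comes from the event $E\coloneqq\lbp N_{a,k}<L/2\rbp$. Under $\mcal{I}_{0}$, arm $a$ is suboptimal with gap $\Delta$, so $E^{c}$ forces at least $L/2$ costly pulls and $R_{T}\ge\frac{\Delta L}{2}\Pr_{0}(E^{c})$; under $\mcal{I}_{k,a}'$, arm $a$ is optimal with gap $\Delta$, so $E$ forces at least $L/2$ costly pulls and $R_{T}\ge\frac{\Delta L}{2}\Pr_{1}(E)$. Combining both with $R_{T}\le cT^{p}$ bounds $\Pr_{0}(E^{c})$ and $\Pr_{1}(E)$ each by $2cT^{p}/(\Delta L)$, and the Bretagnolle--Huber inequality $\Pr_{0}(E^{c})+\Pr_{1}(E)\ge\tfrac{1}{2}\exp(-\KLD{P_{0}}{P_{1}})$ then yields $\KLD{P_{0}}{P_{1}}\ge\log\frac{\Delta L}{8cT^{p}}$. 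Substituting the divergence decomposition gives $\E_{0}\lb N_{a,k}\rb\ge\frac{\sigma^{2}}{2\Delta^{2}}\log\frac{\Delta L}{8cT^{p}}$.

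Finally I would reassemble the regret of the base instance as $R_{T}=\Delta\sum_{k=1}^{N_{T}+1}\sum_{a\neq a_{k}^{*}}\E_{0}\lb N_{a,k}\rb$, plug in the per-pair bound, and simplify $\log\frac{\Delta L}{8cT^{p}}=\log\frac{\Delta}{8c}+(1-p)\log T-\log(N_{T}+1)$ via $L=T/(N_{T}+1)$, which reproduces \eqref{eq:reg_low_inst_dep}; the asymptotic claim $R_{T}=\Omega\lp AN_{T}\log(T/N_{T})\rp$ then follows by treating $\Delta,c,p,\sigma$ as constants and retaining the dominant $(1-p)\log T$ term. The main obstacle I anticipate is not the arithmetic but the bookkeeping of the construction: I must verify that every companion instance $\mcal{I}_{k,a}'$ is a legitimate PS-MAB (correct number and placement of change-points, all suboptimality gaps $\ge\Delta$) and that the divergence decomposition genuinely localizes to block $k$, so that the per-sample divergence is exactly $\KLD{\mcal{N}(0,\sigma^{2})}{\mcal{N}(2\Delta,\sigma^{2})}$ and is not contaminated by observations from the other blocks.
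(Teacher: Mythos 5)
Your proposal is correct, and its core engine coincides with the paper's: an equal-length block partition with $L=T/(N_T+1)$, a companion instance lifting one suboptimal arm to mean $2\Delta$ on one block, a threshold event at $L/2$ pulls, the Bretagnolle--Huber inequality, and the Gaussian per-pull divergence $2\Delta^2/\sigma^2$, with the sublinearity hypothesis $R_T\le cT^p$ bounding both error probabilities---even your constants (the $8c$ and the bracket $\log\frac{\Delta}{8c}+(1-p)\log T-\log(N_T+1)$) come out identical to \eqref{eq:reg_low_inst_dep}. Where you genuinely diverge is in the instance family and in how the $(A-1)$ factor is harvested. The paper builds a recursive family of $2^{N_T+1}$ instances indexed by binary vectors, elevates only the \emph{least-pulled} arm $a_{j,k}=\argmin_{a\neq 1}\E_{j,\pi}\lb n_{a,k}\rb$ in each flipped block, lower-bounds the pulls of that single arm, and multiplies by $(A-1)$ via minimality; because the elevated arm depends on the policy's behavior under a reference instance, two instances differing in one index bit need not agree downstream, which forces the paper's extra change-of-measure step through truncated index vectors $i',j'$ before the KL localizes. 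You instead create one companion per (block, suboptimal arm) pair, each differing from the base in exactly one cell, so the divergence decomposition applies to the full-horizon interaction law directly and you sum the per-pair pull bounds over all $(A-1)(N_T+1)$ pairs---simpler bookkeeping, no truncation, no least-pulled-arm device. What the paper's heavier construction buys is reuse: the inequality $\E_{i,\pi}\lb n_{a_{i,k},k}\rb\le L/(A-1)$, valid only for the least-pulled arm, is exactly what drives the minimax bound of Theorem \ref{thm:reg_low_minimax} from the same family, whereas your per-arm scheme serves Theorem \ref{thm:reg_low_inst_dep} alone. The two verification obligations you flag both check out: with your rotating base, change-points can occur only at the $N_T$ block boundaries (elevating an arm inside block $k$ changes which arms shift at $\nu_{k-1},\nu_k$ but adds no new change-points, and the theorem only requires \emph{at most} $N_T$ changes), and since the two laws agree outside the single (arm, block) cell, the chain rule gives $\mathrm{KL}(P_0\Vert P_1)=\E_0\lb N_{a,k}\rb\cdot 2\Delta^2/\sigma^2$ exactly, uncontaminated by other blocks.
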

%
%
%
\begin{theorem}\label{thm:reg_low_minimax}
For any arbitrary PS-MAB algorithm, there exists a PS-MAB instance with at most $N_{T}$ changes such that the regret on this bandit instance satisfies
\begin{equation} \label{eq:reg_low_minimax}
   R_{T} \geq \frac{\sigma}{27} \sqrt{\lp A - 1 \rp T \lp N_{T} + 1 \rp}.
\end{equation}
which implies that $R_{T}=\Omega(\sqrt{AN_TT})$.
\end{theorem}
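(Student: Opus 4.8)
The plan is to reduce the PS-MAB minimax lower bound to $N_{T}+1$ essentially independent copies of the standard stationary minimax lower bound, one per stationary interval, and then sum the contributions. Since we assume $T$ is divisible by $N_{T}+1$, partition the horizon into $N_{T}+1$ consecutive blocks $\mcal{I}_{1},\dots,\mcal{I}_{N_{T}+1}$, each of length $\tau\coloneqq T/(N_{T}+1)$. The dynamic regret in \eqref{eq:regret} decomposes additively across these intervals, so it suffices to force $\Omega\lp\sigma\sqrt{(A-1)\tau}\rp$ regret within each block; summing the $N_{T}+1$ contributions gives $(N_{T}+1)\cdot\frac{\sigma}{27}\sqrt{(A-1)\tau}=\frac{\sigma}{27}\sqrt{(A-1)T(N_{T}+1)}$, which is exactly \eqref{eq:reg_low_minimax}.

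To build the hard instance, I take all rewards to be Gaussian with variance $\sigma^{2}$ (hence $\sigma^{2}$-sub-Gaussian) and, on each block $k$, assign one ``good'' arm $j_{k}\in[A]$ the mean $\Delta$ and every other arm the mean $0$, where $\Delta\simeq\sigma\sqrt{(A-1)/\tau}$ is to be tuned. Drawing $j_{1},\dots,j_{N_{T}+1}$ independently and uniformly at random yields a random PS-MAB instance whose means change only at block boundaries, so it has at most $N_{T}$ change-points, as required. By linearity, $\E_{J}\lb R_{T}\rb=\sum_{k=1}^{N_{T}+1}\E_{J}\lb R^{(k)}\rb$, where $R^{(k)}=\Delta\lp\tau-\E\lb T_{j_{k}}^{(k)}\rb\rp$ is the regret accrued in block $k$ and $T_{j_{k}}^{(k)}$ counts the pulls of the good arm during that block. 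Establishing $\E_{J}\lb R^{(k)}\rb\geq\frac{\sigma}{27}\sqrt{(A-1)\tau}$ for every $k$ and then invoking the probabilistic method (an averaged bound is achieved by some deterministic assignment, hence by a concrete instance) completes the argument.

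The crux is the per-block bound, and the key observation is that the information-theoretic comparison localizes to block $k$. Fix all good arms except $j_{k}$, and compare the realized law against the ``null'' law obtained by setting every arm on block $k$ to mean $0$ while leaving the other blocks untouched. The two trajectory laws agree at every step outside $\mcal{I}_{k}$ and, within $\mcal{I}_{k}$, differ only through the reward of arm $j_{k}$; hence the divergence decomposition gives $\KLD{\Pr_{\mrm{null}}}{\Pr_{j_{k}}}=\E_{\mrm{null}}\lb T_{j_{k}}^{(k)}\rb\cdot\frac{\Delta^{2}}{2\sigma^{2}}$, where the state carried over from blocks $1,\dots,k-1$ enters only through $\E_{\mrm{null}}\lb T_{j_{k}}^{(k)}\rb$ and never through the form of this identity. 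Averaging over $j_{k}\in[A]$, using $\sum_{i}\E_{\mrm{null}}\lb T_{i}^{(k)}\rb=\tau$ together with Cauchy--Schwarz and the Bretagnolle--Huber (or Pinsker) inequality exactly as in the stationary minimax proof in \cite{lattimore2020bandit} (Theorem~15.2), and then optimizing over $\Delta$, yields the block bound $\frac{\sigma}{27}\sqrt{(A-1)\tau}$ uniformly over the state with which the algorithm enters the block.

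I expect the main obstacle to be making this localization rigorous in the presence of cross-block memory: one must argue that the standard two-point stationary argument applies \emph{conditionally} on the arbitrary internal state and observation history accumulated over the preceding blocks. This works precisely because the divergence between the null and perturbed laws is supported on $\mcal{I}_{k}$, so the only past-dependent quantity is $\E_{\mrm{null}}\lb T_{j_{k}}^{(k)}\rb$, and its sum over arms is deterministically $\tau$ regardless of the entering state. A secondary technical point is verifying that the scale $\Delta\simeq\sigma\sqrt{(A-1)/\tau}$ keeps the suboptimality and change gaps finite and strictly positive, so that the construction is a legitimate, nontrivial PS-MAB instance with at most $N_{T}$ changes.
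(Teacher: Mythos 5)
Your proposal is correct, and it shares the paper's overall skeleton---partition the horizon into $N_T+1$ equal blocks of length $M=T/(N_T+1)$, prove a per-block lower bound of order $\sigma\sqrt{(A-1)M}$ that holds uniformly over whatever state the algorithm carries into the block, and extract a single hard instance from an averaged bound via the probabilistic method---but your per-block information argument takes a genuinely different route. The paper constructs $2^{N_T+1}$ instances indexed by binary vectors through a recursive assignment: in each block, the arm elevated to mean $2\Delta$ is the \emph{least-pulled} suboptimal arm (in expectation) under the neighboring instance, so the pigeonhole bound $\E_{i,\pi}\lb n_{a_{i,k},k}\rb\le M/(A-1)$ deterministically controls the KL divergence, and the two-point Bretagnolle--Huber inequality lower-bounds the \emph{sum} of the two neighboring instances' block regrets (both incur regret because arm $1$ is kept at mean $\Delta$ throughout). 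You instead randomize the good arm uniformly over $[A]$ in each block ($A^{N_T+1}$ instances) and compare against an auxiliary all-zero null law lying outside the family; Cauchy--Schwarz over arms, via $\sum_a \E_{\mathrm{null}}\lb T_a^{(k)}\rb = M$, plays the role of the paper's pigeonhole, and Pinsker replaces Bretagnolle--Huber. Your KL-localization observation---that the divergence is supported on block $k$ and the entering state enters only through $\E_{\mathrm{null}}\lb T_{j_k}^{(k)}\rb$---is exactly the same insight as the paper's change of measure with the truncated index vectors $i',j'$, and it is what makes the blockwise application sound. What each buys: the paper avoids auxiliary instances but pays with an intricate recursive mean assignment (the elevated arm in instance $j$ is defined by running $\pi$ on a neighbor), while your symmetrized version needs no recursion, treats all blocks identically, and with Pinsker yields a per-block constant near $\sigma\sqrt{(A-1)M}/8$, better than the paper's $1/(16e^{1/2})\approx 1/26.4$. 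Two small points to fix in a careful write-up: resolve your ``(or Pinsker)'' hedge in favor of Pinsker, since Bretagnolle--Huber does not combine cleanly with the null comparison (the null law has zero regret on block $k$, so its overpull events $\lbp T_{j_k}^{(k)}>M/2\rbp$ must be handled by a Markov argument summed across arms, which hurts constants for small $A$); and note that the averaged argument you describe is the randomized-instance proof rather than the two-point proof of \cite[Theorem 15.2]{lattimore2020bandit} that you cite.
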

%

When the number of change-points does not grow linearly with $T$, the instance-dependent bound has a logarithmic growth in $T$ which is  smaller than the $\sqrt{T}$ growth in the minimax bound. The same discrepancy exists in lower bounds for stationary bandits. Algorithms developed for PS-MABs in \cite{auer2019adswitch, besson2022efficient, wei2021non, abbasi2023new} 
attain an $\mcal{O}\lp  \sqrt{N_{T} A T \log T} \rp$ regret that matches the minimax lower bound, but not the instance dependent bound. We suspect that the instance-dependent bound is loose and unachievable; settling this conjecture remains an open question. In our proof, we construct a collection of PS-MAB isntances with identical change-points and show that the average regret over the collection must exceed the lower bounds provided. Therefore, these lower bounds can be achieved by a procedure that knows exactly  when the changes occur. 
We believe that the task of reacting to the change-points poses the fundamental bottleneck to algorithm design for PS-MABs, and hence, a lower bound analysis that simplifies the need to react to unknown change-points can be loose. 

\section{The DAB Framework}
\label{sec:DAB}
We now present the algorithm for PS-MABs that combines a stationary MAB algorithm $\mcal{B}$ with a change detector $\mcal{D}$ that is aided by extra samples from a carefully controlled forced exploration strategy. We begin by describing the nature of this forced exploration. 
Let $\tau_{k}$ be the $k^{\mrm{th}}$ detection time, with $\tau_{0}\coloneqq0$. 
Let $\alpha_{k}\in\lp0,1\rp$ be the {forced exploration frequency} for the $k^{\mrm{th}}$ interval. Then, for each $k\in[N_T]$, and for every $\lce A /\alpha_{k}\rce$ rounds in $\lbp \tau_{k - 1} + 1, \dots, \tau_{k} \rbp$, the procedure pulls all $A$ arms once in a round-robin fashion first and then follows the bandit algorithm $\mcal{B}$ afterwards. 
DAB procedures are formally presented in Algorithm \ref{alg:DAB} and illustrated in Figure \ref{fig:notation}.

Algorithm \ref{alg:DAB} requires the knowledge of the horizon $T$; this can be circumvented using the ``doubling trick'' in \cite{besson2018doublingtrickscantmultiarmed} to horizons of unknown lengths. We highlight that our DAB framework is global restart-type, where the reward history from all arms is discarded whenever a change in distribution is detected in any of the arms.
The framework is modular; $\mcal{B}$ runs on its own, while $\mcal{D}$ utilizes samples from forced exploration to detect changes and trigger restarts for $\mcal{B}$. Notice that we do not allow $\mcal{B}$ to use samples from forced exploration, but as will become clear, allowing $\mcal{D}$ to use observations of arm pulls from $\mcal{B}$ does not hinder the modularity of algorithm deign or its regret analysis.

\begin{algorithm}[!h]
\caption{Modular \textbf{D}etection \textbf{A}ugmented \textbf{B}andit (DAB) procedure}\label{alg:DAB}
\textbf{Input}: change detector $\mcal{D}$, bandit algorithm $\mcal{B}$, forced exploration frequencies $\lbp\alpha_{k}\rbp_{k=1}^{\infty}$, horizon $T$, number of arms $A$ \\
\textbf{Initialization}: last restart $\tau\leftarrow0$, $\forall\,a\in[A]$, history list for change detectors $H_{a,\mcal{D}}\leftarrow\emptyset$, history list for stationary bandit algorithm $H_{\mcal{B}}\leftarrow\emptyset$, number of intervals $k \leftarrow 1$

\begin{algorithmic}[1]
\FOR{$t=1,2,\dots,T$}
    \IF{$\lp t-\tau-1\mod{\lce\frac{A}{\alpha_{k}}\rce}\rp+1=a\in[A]$} 
        \STATE $a_t\leftarrow a$ (forced exploration (for change detector))
        \STATE Play arm $a_t$ and receive the reward $X_{a_t,t}$
    \ELSE
        \STATE $a_t\leftarrow \mcal{B}\lp H_{\mcal{B}}\rp$ (stationary bandit algorithm)
        \STATE Play arm $a_t$ and receive the reward $X_{a_t,t}$
        \STATE Add $(X_{a_t,t}, a_t)$ into the bandit history list $H_{\mcal{B}}$
    \ENDIF
    \STATE Add $X_{a_t,t}$ into the change detector history list $H_{a_t,\mcal{D}}$
    \IF{$\mcal{D}\lp H_{a_t,\mcal{D}}\rp=1$ (change detected)}
        \STATE $\tau\leftarrow t$
        \STATE $\forall\,a\in[A],\;H_{a,\mcal{D}}\leftarrow\emptyset$
        \STATE $H_{\mcal{B}}\leftarrow\emptyset$
        \STATE $k \leftarrow k + 1$
    \ENDIF
\ENDFOR
\end{algorithmic}
\end{algorithm}

\begin{figure*}[h]
    \centering
    \begin{tikzpicture}[scale=0.45]
        \draw (-21,1)-- (-5,1); 
        \node at (-21,3) {Detection Point:};
        
        \draw (-21,1.5) -- (-21,0.5);
        \node at (-21,2) {$\tau_{0}$};
        \node at (-21,0) {$0$};
        
        \draw (-18,0.5) -- (-18,1);
        \node at (-18,0) {$A$};
        \node at (-19.5,-1) {Pull each};
        \node at (-19.5,-2) {arm once};

        \draw (-10,0.5) -- (-10,1);
        \node at (-10,0) {$\lce A/\alpha_{1}\rce$~~};
        \node at (-14,-1) {Bandit};
        \node at (-14,-2) {algorithm};
        
        \draw (-7,0.5) -- (-7,1);
        \node at (-7,0) {~~$\lce A/\alpha_{1}\rce+A$};
        \node at (-8.5,-1) {Pull each};
        \node at (-8.5,-2) {arm once};
        
        \node at (-3,-1) {Bandit};
        \node at (-3,-2) {algorithm};
        \node at (-3,1) {$\cdots$};
        
        \draw (1,1.5) -- (1,0.5);
        \node at (1,2) {$\tau_{1}$};
        \node at (1,3) {Clear $H_{\mcal{B}}$ and $H_{a,\mcal{D}}$};

        \draw (-1,1) -- (7,1);
        \draw (4,0.5) -- (4,1);
        \node at (4,0) {$\tau_{1}+A$};
        \node at (2.5,-1) {Pull each};
        \node at (2.5,-2) {arm once};

        \node at (7,-1) {Bandit};
        \node at (7,-2) {algorithm};
        \node at (8,1) {$\cdots$};
        
    \end{tikzpicture}
    \caption{Illustration of the workflow of Procedure \ref{alg:DAB}.}
    \label{fig:notation}
\end{figure*}


\section{Modular Regret Analysis of DAB Procedures}
\label{sec:mod}
One of the main goals of our paper is to develop a modular regret analysis that leverages established regret properties of $\mcal{B}$ and identify properties of $\mcal{D}$ that allow a modular regret analysis that transforms an order-wise regret bound for $\mcal{B}$ and yields the counterpart for DAB. As we will illustrate, our  analysis can be applied to obtain regret bounds for various combinations of change detectors and bandit algorithms.
 Such an analysis is largely absent in prior work.

%

\subsection{Requirements for Stationary Bandit Algorithms and Change Detectors} \label{sec:MAB_CD_Req}
We require $\mcal{B}$ to satisfy the following property.

\begin{property}[stationary bandit algorithm regret]\label{proper:regret}
For the stationary bandit algorithm $\mcal{B}$ on a  stationary bandit instance with $A$ arms and suboptimality gaps $\lbp\Delta_{a}\rbp_{a=1}^{A}$ where $\Delta_{a} \leq C$ for any $a\in[A]$, its minimax regret upper bound over $T$ rounds is  $R_{\mcal{B}}\lp T \rp$ that is concave and increases sublinearly with $T$, specifically as {$\mcal{O}\lp \sqrt{T}\rp$} up to polylog factors. 
\end{property} 
This property holds for many well-known bandit algorithms. For example, the regret upper bound for UCB, which is independent of the suboptimality gaps, is $8\sqrt{A T \log \lp T \rp} + \mcal{O} \lp 1 \rp$  \cite[Theorem 7.2]{lattimore2020bandit} and satisfies Property~\ref{proper:regret}. We 
choose to limit the minimax regret rate for $\mcal{B}$ because that bound applies to stationary bandit instances with arbitrary sub-optimality gaps across change-points in an PS-MAB instance.


Next, we turn to the requirements for the change detector $\mcal{D}$. The goal is to detect changes as soon as possible while not raising false alarms too often over the horizon. Taking cues from the regret analysis in \cite{besson2022efficient}, if the change detector gets falsely triggered or if it detects a change too late, the samples for detecting the next change-point may be insufficient, making the change detector unable to detect the next change. When any of the changes remain undetected over the entire horizon, which is defined as a \emph{missed detection} event, the worst-case regret bound is linear \cite{besson2022efficient}. Thus, we control the probability of missed detection. Because false alarm events and late detection events could possibly lead to missed detection events, we additionally  ensure that these events happen with a small probability. Finally, since pulling suboptimal arms during the detection delay will also lead to linear regret, the threshold for detection delay, referred to as the \emph{latency}, is kept small.

Before formally laying out the properties that $\mcal{D}$ should possess, we formulate the general QCD problem associated with our analysis: Let $\{X_n: n \in \mbb{N}\}$ be a sequence of independent random variables observed sequentially by the detector.  When the change-point occurs at $\nu\in\mbb{N}$, 
\begin{align}
    X_{n}\sim\begin{dcases}
    f_{0},\;n < \nu\\
    f_{1},\;n\geq \nu
    \end{dcases}.\label{eq:sample_distr}
\end{align}
In other words, before the change-point $\nu$, the stochastic samples follow the pre-change distribution with density $f_{0}$. The remaining samples follow the post-change distribution with density $f_{1}$. Additionally, $\Pr_{\nu}$ denotes the probability measure under which the change-point occurs at $\nu\in\mbb{N}$, while $\Pr_{\infty}$ denotes the probability measure under which there is no change-point ($\nu=\infty$). We assume that the densities $f_{0}$ and $f_{1}$ are $\sigma^{2}$-sub-Gaussian with  mean $\mu_{0}$ and $\mu_{1}$, and let $\Delta_{\mathrm{c}} \coloneqq \lba\mu_{0}-\mu_{1}\rba$. We also assume that the change detector only knows $\sigma$, but is agnostic to the actual densities $f_{0}$ and $f_{1}$. Let the change-point be deterministic and unknown to the change detector.  

Over a finite horizon $M\in\mbb{N}$, the detector samples the random variables $X_{1},\dots,X_{M}$ sequentially. Every causal change detector $\mcal{D}$ can be associated with a stopping time $\tau$, at which the detector declares a change. Because $f_{0}$ is unknown to $\mcal{D}$, we need to guarantee that there are enough samples for $\mcal{D}$ to learn sufficient information about the pre-change distribution. Hence, we assume that there exists a \emph{pre-change window} $m$ in which the change-point does not occur (i.e., $\nu>m$). 

We define the \emph{latency} $d$ associated with a change detector $\mathcal{D}$ as the length of time post-change within which a change is declared with a probability $1-\delta_{\mathrm{D}}$, i.e., 
\begin{equation}\label{eq:latency_def}
d \coloneqq\inf\{ t\in[M]:\mathbb{P}_{\nu}(\tau\geq\nu+t)\leq\delta_{\mathrm{D}}, \forall\,\nu\in\lbp m+1,\dots,M-t\rbp\}.   
\end{equation}
The latency $d$ can be thought of a high probability version of the delay as opposed to the average delay that is typically used as a metric in QCD problems as in \cite{lorden1971procedures, moustakides1986optimal, lai1998information}.
A good change detector $\mathcal{D}$ seeks to minimize $d$ (at performance level $\delta_{\mathrm{D}}$) with low false alarm probability over the horizon $M$, i.e., 
\begin{equation} \label{eq:PFbound}
\mathbb{P}_{\infty} (\tau \leq M) \leq \delta_{\mathrm{F}}, \textup{~~with~} \delta_{\mathrm{F}} \in (0, 1).
\end{equation}
The change detection problem is characterized by the horizon of interest $M$ and the mean-shift $\Delta_{\mrm{c}}$. ${\mcal D}$ then defines a stopping rule to yield the pre-change window length $m$, given the required performance levels \textup{$\delta_{\mrm{F}}, \delta_{\mrm{D}} \in\lp0,1\rp$}, which in turn yields a latency $d$. While one would ideally like $\mcal{D}$ to use fewer samples $m, d$, there is a trade-off, however; smaller $d$ requires a larger $m$, i.e., more pre-change samples are required to flag a change with low latency. As the proofs of the regret bound results in Theorem~\ref{thm:det_reg} and Corollary~\ref{cor:order_reg} reveal, the growth rate of $m$ and $d$ for a good change detector $\mcal{D}$ must satisfy the following property to guarantee optimal regret growth.
\begin{property}[change detector latency]
\label{proper:cd}
Consider the change detection problem for the observation model in \eqref{eq:sample_distr} with mean shift $\Delta_{\mrm{c}}$ over horizon $M$. Furthermore, consider a change detector $\mcal{D}$ with performance levels \textup{$\delta_{\mrm{F}}, \delta_{\mrm{D}} \in\lp0,1\rp$}, with stopping time $\tau$ and pre-change window $m$ chosen to satisfy \eqref{eq:PFbound}. Then the pre-change window $m$ and latency $d$ defined in \eqref{eq:latency_def}  should satisfy the following properties: 
\begin{itemize}
    \item[(i)]  $d$ and $m$ should be decreasing with $\Delta_{\mathrm{c}}$ and increasing with $M$,
\item[(ii)] $m + d  \lesssim  \log M + \log (1/\delta_{\mathrm{F}}) + \log (1/\delta_{\mathrm{D}})$.
\end{itemize}
\end{property}

Notice that $\Delta_{\mrm{c}}$ is a measure of how discernible the changes are. The larger $\Delta_{\mrm{c}}$ is, the easier it should become to detect with a reasonable change detector, requiring lower values of $m$ and $d$  for guaranteed performance levels defined by \textup{$\delta_{\mrm{F}}, \delta_{\mrm{D}}$}. Larger horizons must impose greater chances of false alarms and delay, and therefore again, the $m$ and $d$ should grow with the horizon $M$. Furthermore, the change detection must occur sufficiently fast and must not dominate the regret growth of the stationary bandit algorithm. As our analysis will show, the logarithmic growth of $m+d$ in part (ii) of Property~\ref{proper:cd} yields order-optimal regret bounds for DAB procedures.

\subsection{Modularized Regret Analysis} \label{sec:MAnal}
Having outlined the properties we want $\mcal{B}$ and $\mcal{D}$ to satisfy, we now present the frequency of forced exploration and then the regret upper bounds for DAB procedures with said frequency. To specify that frequency, notice that 
$\mcal{D}$ that satisfies Property \ref{proper:cd} needs at least $m$ pre-change samples and $d$ post-change samples to detect changes quickly with high probability. For the DAB procedures, however, each arm is not pulled every time step. 
With a forced exploration frequency $\alpha_{k}$, $\mcal{D}$ is guaranteed to obtain one sample from each arm every $\lce A/\alpha_{k}\rce$ rounds. 
Then, we define the latency and the pre-change window for a DAB procedure at the $k^{\mrm{th}}$ change-point as:
\begin{align}
    m_{k}&\coloneqq\lce A/\alpha_{k} \rce m\lp\underline{\Delta}_{\mrm{c}}, T\rp,\label{eq:pre-window}\\
    d_{k}&\coloneqq\lce A/\alpha_{k} \rce d\lp\underline{\Delta}_{\mrm{c}}, T\rp. \label{eq:latency}
\end{align}
where $d$ and $m$ are explicitly written as functions of $\underline{\Delta}_{\mrm{c}}$, the minimum change-gap. Define $d_{0}\coloneqq0$ for notational convenience. In the definition of $m_k$ and $d_k$, the horizon is assumed to be $T$, rather than the rounds remaining after the latest detection (which is upper-bounded by $T$). This is justified by Property~\ref{proper:cd} (i), which says that $m$ and $d$ are increasing with $M$.

With Properties \ref{proper:regret} and \ref{proper:cd} being satisfied, we have the following result that characterizes the modular regret upper bound for DAB procedures with $\mcal{B}$ and $\mcal{D}$ with the specified forced exploration frequency. The proof is given in the Appendix~\ref{sec:thm1}.
\begin{theorem}[modular regret upper bound for DAB procedures]\label{thm:det_reg}
Consider a piecewise stationary bandit environment with minimum change-gap $\underline{\Delta
}_{\mrm{c}}$. Furthermore, consider a DAB procedure (Procedure~\ref{alg:DAB}) using a change detector $\mcal{D}$ with parameters \textup{$\delta_{\mrm{F}}$ and $\delta_{\mrm{D}}$}, stationary bandit algorithm $\mcal{B}$ (with suboptimality gap upper bound $C$), and forced exploration frequencies $\lp \alpha_{k} \rp_{k = 1}^{N_{T} + 1}$
Suppose further that the following condition holds:
\begin{equation} \label{eq:cond1}
d_{k - 1} + m_{k} \leq \nu_{k} - \nu_{k - 1}, ~\emph{for~all~} k \in [N_T]
\end{equation}
where $m_{k}$ and $d_k$ are as defined in \eqref{eq:pre-window} and \eqref{eq:latency}, respectively. Then, the regret
is upper bounded as follows:
\begin{align}
\begin{aligned}
    R_{T}& \leq C T A\lp N_{T} + 1\rp\delta_{\mrm{F}} + C T N_{T} \delta_{\mrm{D}} + \lp N_{T} + 1 \rp R_{\mcal{B}}\lp \frac{T}{N_{T} + 1} \rp \\
    &\quad+ C \sum_{k=1}^{N_{T}} d_{k} + C\lb \bar{\alpha} T + \lp N_T + 1 \rp A \rb \label{eq:det_reg}.
\end{aligned}
\end{align}
where $\bar{\alpha} \coloneqq \max_{k = 1, \dots, N_{T} + 1} \alpha_{k}$.
\end{theorem}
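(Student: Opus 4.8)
The plan is to isolate a single ``good'' event $\mcal{G}$ on which every change is detected promptly and no spurious restart occurs, control the complement by the crude per-step bound $C$, and then decompose the regret on $\mcal{G}$ into the four remaining sources. Writing $\tau_{k}$ for the $k^{\mrm{th}}$ detection time (with $\tau_{0}=0$ and $\tau_{N_{T}+1}\coloneqq T$ for bookkeeping), set
\begin{equation*}
\mcal{G} \coloneqq \lbp \nu_{k} \leq \tau_{k} \leq \nu_{k} + d_{k} - 1 \text{ for all } k \in [N_{T}] \rbp,
\end{equation*}
so that each $\nu_{k}$ is detected after it occurs (no false alarm within the $k^{\mrm{th}}$ stationary stretch) and within the scaled latency window $d_{k}$. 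Since the per-step regret lies in $[0,C]$, the total regret never exceeds $CT$, whence
\begin{equation*}
R_{T} \leq \E\lb (\text{total regret}) \cdot \Indc{\mcal{G}} \rb + C T \, \Pr\lp \mcal{G}^{c} \rp,
\end{equation*}
and it remains to bound the two pieces.

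For $\Pr(\mcal{G}^{c})$ I would induct on $k$, which is exactly where the separation condition \eqref{eq:cond1} enters. Granting that detections $1,\dots,k-1$ are on time, a clean restart occurs at $\tau_{k-1}\leq\nu_{k-1}+d_{k-1}-1$, and \eqref{eq:cond1} gives $\tau_{k-1}+m_{k}\leq\nu_{k-1}+d_{k-1}+m_{k}\leq\nu_{k}$; hence the entire pre-change window $m_{k}$ fits before $\nu_{k}$, so the forced-exploration sub-sequence furnishes each arm with at least $m\lp\underline{\Delta}_{\mrm{c}},T\rp$ genuine pre-change samples---precisely the regime in which Property~\ref{proper:cd} certifies $\mcal{D}$. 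As a restart is triggered whenever any of the $A$ per-arm detectors fires, a union bound over arms caps the false-alarm probability in the $k^{\mrm{th}}$ stretch by $A\delta_{\mrm{F}}$, while \eqref{eq:latency_def} caps the late-detection probability by $\delta_{\mrm{D}}$. Taking a union bound over the $N_{T}+1$ stationary stretches (false alarms) and the $N_{T}$ changes (late detections) yields $\Pr(\mcal{G}^{c})\leq A(N_{T}+1)\delta_{\mrm{F}}+N_{T}\delta_{\mrm{D}}$, which after multiplying by $CT$ produces the first two terms of \eqref{eq:det_reg}.

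On $\mcal{G}$ I would split each detected segment $[\tau_{k-1}+1,\tau_{k}]$ into its \emph{stationary part} $[\tau_{k-1}+1,\nu_{k}-1]$ of length $L_{k}$, on which the environment is genuinely the $k^{\mrm{th}}$ stationary instance (using $\tau_{k-1}\geq\nu_{k-1}$ and $\tau_{k-1}<\nu_{k}$ from the previous step), and its \emph{delay part} $[\nu_{k},\tau_{k}]$. The delay part has length at most $d_{k}$ on $\mcal{G}$, contributing at most $C\sum_{k=1}^{N_{T}}d_{k}$ (the fourth term). On the stationary part, each block of $\lce A/\alpha_{k}\rce$ rounds contains exactly $A$ forced pulls, so the number of forced pulls there is at most $\alpha_{k}L_{k}+A$ since $A/\lce A/\alpha_{k}\rce\leq\alpha_{k}$; summing over segments and bounding each forced pull by $C$ gives $C\lb\bar{\alpha}T+(N_{T}+1)A\rb$ (the last term). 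The remaining, $\mcal{B}$-governed rounds of the stationary part incur expected regret at most $R_{\mcal{B}}(L_{k})$ because the restart at $\tau_{k-1}$ resets $\mcal{B}$'s history and the stream it observes is stationary; as the stationary parts are disjoint subsets of $[1,T]$ we have $\sum_{k=1}^{N_{T}+1}L_{k}\leq T$, and the concavity and monotonicity of $R_{\mcal{B}}$ from Property~\ref{proper:regret} let Jensen's inequality give $\sum_{k}R_{\mcal{B}}(L_{k})\leq(N_{T}+1)R_{\mcal{B}}\lp T/(N_{T}+1)\rp$ (the third term).

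The main obstacle is the last step's tacit claim that $\mcal{B}$ and $\mcal{D}$ can be analyzed in isolation despite sharing a common observation history. The delicate point is twofold: the per-arm detectors must consume samples arriving at the guaranteed forced-exploration rate (one per $\lce A/\alpha_{k}\rce$ rounds) for the sample-indexed quantities $m,d$ of Property~\ref{proper:cd} to convert into the round-indexed $m_{k},d_{k}$, while $\mcal{B}$ must simultaneously see a clean stationary stream on each good segment for $R_{\mcal{B}}$ to apply verbatim. I would resolve this by attributing $\mcal{D}$'s guarantee \emph{solely} to the forced-exploration sub-sequence---so that any additional adaptive samples from $\mcal{B}$ can only accelerate detection and never jeopardize the pre-change window requirement---and by exploiting the global restart to decouple the segments, which keeps the events $\lbp\nu_{k}\leq\tau_{k}\leq\nu_{k}+d_{k}-1\rbp$ amenable to the sequential conditioning used in the second paragraph, thereby closing the union bound.
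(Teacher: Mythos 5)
Your proposal is correct in substance and follows essentially the same route as the paper's proof: a good event built from $\{\nu_k \le \tau_k \le \nu_k + d_k - 1\}$, a union bound splitting the bad-event probability into $A(N_T+1)\delta_{\mrm{F}}$ false-alarm and $N_T\delta_{\mrm{D}}$ late-detection contributions (with condition \eqref{eq:cond1} guaranteeing the $m(\underline{\Delta}_{\mrm{c}},T)$ forced pre-change samples per arm), and a per-segment split into delay, forced-exploration, and $\mcal{B}$-regret terms closed by Jensen's inequality. The paper differs only cosmetically: it conditions interval by interval on events $\mcal{G}_k$ (charging $C(\nu_k-\nu_{k-1})\Pr(\mcal{G}_k^c)$ per interval rather than $CT\Pr(\mcal{G}^c)$ once), which yields the same first two terms. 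Your resolution of the shared-history issue---crediting $\mcal{D}$'s guarantee solely to the forced-exploration sub-sequence so that extra samples from $\mcal{B}$ can only accelerate detection, while keeping $\mcal{B}$ blind to forced samples so $R_{\mcal{B}}$ applies verbatim---is exactly the paper's modularity argument.

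One slip to fix: as written, $\mcal{G}$ constrains only $\tau_1,\dots,\tau_{N_T}$, so it does not rule out a false alarm in the final stretch $(\tau_{N_T}, T]$. Such a detection restarts $\mcal{B}$ inside the last stationary run, splitting it into two pieces; since the concave, increasing $R_{\mcal{B}}$ is subadditive, the split run can cost more than $R_{\mcal{B}}(L_{N_T+1})$, so your third term need not hold on the literal $\mcal{G}$. The fix is to add the event that no $(N_T+1)^{\mrm{th}}$ detection occurs within the horizon---your own union bound over $N_T+1$ stationary stretches already budgets $A(N_T+1)\delta_{\mrm{F}}$ for precisely this, so the probability bound is unchanged; this matches the paper, whose final bound likewise counts false alarms over all $N_T+1$ intervals. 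Two further technicalities the paper handles that you elide: changes with $\nu_l > T - d_l$ are treated separately (the cost of not detecting them is at most $Cd_l$, already present in the fourth term), and the change-of-measure steps to $\Pr_{\infty}$ and $\Pr_{\nu}$ require the conditioning events to be measurable with respect to the pre-restart history, which your ``sequential conditioning'' would need to make explicit.
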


Condition \eqref{eq:cond1} in Theorem \ref{thm:det_reg} guarantees that the $k^{\text{th}}$ change-point will not happen during the pre-change window, given that the $\lp k - 1 \rp^{\text{th}}$ change is detected within $d_{k}$. A similar condition (see \cite[Assumption 4]{besson2022efficient}) is also imposed in the regret analysis for the GLR-klUCB algorithm.  Without this condition, a careful analysis of the regret would require bounding more precisely the effect of missing a change on the detection of subsequent changes, which can be a challenging task. We expound further on the change-point separation requirement in Theorem \ref{thm:det_reg} in the discussion surrounding Corollary \ref{cor:order_reg_2}.
There are five different components that contribute to the regret bound in Theorem \ref{thm:det_reg}. The first term $C T A\lp N_{T} + 1\rp\delta_{\mrm{F}}$ stems from false alarm events, in which the number of intervals is $N_{T}+1$, and the probability of false alarm on each arm during each interval is $\delta_{\mrm{F}}$. Each false alarm event then leads to linear regret $C T$ in the analysis. 
The second term $C T N_{T} \delta_{\mrm{D}}$ results from late detection events, in which the number of change-points is $N_{T}$ and the probability that all arms fail to detect within the latency is $\delta_{\mrm{D}}$. The late detection event also leads to linear regret $C T$ in the analysis. 
The third term $\lp N_{T} + 1\rp R_{\mcal{B}}\lp T / \lp N_{T} + 1 \rp \rp$ results from the stationary bandit algorithm. The intuition is that the regret during each interval is upper bounded by $R_{\mcal{B}}\lp \nu_{k} - \nu_{k - 1} \rp$, and the summation of these regret upper bounds is maximized when $\nu_{k} - \nu_{k - 1}$ is approximately $T/\lp N_{T} + 1 \rp$ for each $k$. 
The fourth term $ C \sum_{k=1}^{N_{T}} d_{k}$ represents the regret during delays, for which the latency $d_{k}$ associated with change-point $\nu_k$ is smaller than $d_{k}$. 
The forced exploration leads to the fifth term $C \bar{\alpha} T + C \lp N_T + 1 \rp A $, for which, over the finite horizon $T$, the procedure executes forced exploration for approximately at most $\bar{\alpha} T$ rounds, and $C \lp N_T + 1 \rp A$ bounds the round-off errors.

In Procedure \ref{alg:DAB}, we do not allow the stationary bandit algorithm to observe the samples acquired from forced exploration. If the stationary bandit algorithm were to have access to all samples, as is the case in \cite{besson2022efficient}, $R_{\mcal{B}}$ could not be plugged directly into our regret analysis, thereby breaking the modularity of the regret upper bound (see also step $(a)$ in \eqref{eq:bound_inter} in the proof of Theorem \ref{thm:det_reg}). On the other hand, 
letting the change detector having access to samples obtained from stationary bandit algorithms does not affect the modularity.

\begin{remark}
Although the bound in Theorem \ref{thm:det_reg} appears to be linear with respect to $T$, we stress that we can set $\delta_{\mrm{F}}, \delta_{\mrm{D}}$, and $\lbp\alpha_{k} \rbp$ in a manner to make the regret upper bound in \eqref{eq:det_reg} a sublinear function of $T$. These choices of $\delta_{\mrm{F}}$, $\delta_{\mrm{D}}$ and $\lbp\alpha_{k}\rbp$ are described in Corollary~\ref{cor:order_reg} in the next subsection.
\end{remark} 


\subsection{Application to Various Combinations of Change Detectors and Stationary Bandit Algorithms} \label{sec:DAB_cor}

Theorem \ref{thm:det_reg} allows us to study regret upper bounds for DAB procedures that combine different stationary bandit algorithms with different change detectors. Consider any stationary bandit algorithm for which the regret satisfies Property~\ref{proper:regret} and  scales with $T$ as at most $\sqrt{T \log(T)}$. Examples include UCB \cite{lattimore2020bandit} and klUCB \cite{cappe2013kullback}, for which we have the following (instance-independent) regret upper bounds:
\begin{align}\label{eq:UCBs}
    &R_{\mrm{UCB}}\lp T \rp = 8\sqrt{\sigma^{2} A T \log \lp T \rp} + \mcal{O} \lp 1 \rp,\\
    &R_{\mrm{klUCB}}\lp T \rp \coloneqq 2\sqrt{2\sigma^{2}AT\log (T)} + \mcal{O} \lp 1 \rp.
\end{align}

Next, we consider change detectors that satisfy Property~\ref{proper:cd}. The first candidate we study is a generalized likelihood ratio (GLR) based QCD test designed for sub-Gaussian observation statistics, which is similar to the GLR QCD test for sub-Bernoulli statistics used in \cite{besson2022efficient}.
For any desirable false alarm probability $\delta_{\mrm{F}}\in\lp0,1\rp$, define 
\begin{equation}\label{eq:GLR_beta}
    \beta\lp n,\delta_{\mrm{F}}\rp\coloneqq6\log\lp1+\log\lp n\rp\rp+\frac{5}{2}\log\lp\frac{4n^{3/2}}{\delta_{\mrm{F}}}\rp+11.
\end{equation}
The stopping time of the GLR test is given by
\begin{align}
    \tau\coloneqq\inf\lbp n\in\mbb{N}: G_{n}\geq\beta\lp n,\delta_{\mrm{F}}\rp\rbp\label{eq:GLR_stop}
\end{align}
where the GLR statistics $G_{n}$ is
\begin{align}
    G_{n}\coloneqq\sup_{s\in \lb n\rb}\log\lp\frac{\sup_{\theta_{0},\theta_{1}\in\mbb{R}}\sup_{\theta_{1}\in\mbb{R}}\prod_{i=1}^{s}f_{\theta_{0}}\lp X_{i}\rp\prod_{i=s+1}^{n}f_{\theta_{1}}\lp X_{i}\rp}{\sup_{\theta\in\mbb{R}}\prod_{i=1}^{n}f_{\theta}\lp X_{i}\rp}\rp\label{eq:GLR_stat}
\end{align}
in which $f_{\theta}$ is the density of a Gaussian random variable with mean $\theta \sigma^{2}$ and variance $\sigma^{2}$. 

The GLR test can be considered as a generalization of the CuSum test \cite{vvv_qcd_overview}. A well-known alternative to the CuSum test for QCD problems is Shiryaev-Roberts (SR) test \cite{vvv_qcd_overview}, and we can construct a generalization of this test, which we call the generalized Shiryaev-Roberts (GSR) test, which is characterized by the stopping rule,
\begin{align}
    \tau\coloneqq\inf\lbp n\in\mbb{N}: \log W_{n} \geq \beta \lp n,\delta_{\mrm{F}}\rp + \log n \rbp\label{eq:GSR_stop}
\end{align}
where the GSR statistic $W_{n}$ is given by
\begin{align}
W_{n}\coloneqq \frac{1}{n}\sum_{s=1}^{n}\lp\frac{\sup_{\theta_{0}\in\mbb{R}}\sup_{\theta_{1}\in\mbb{R}}\prod_{i=1}^{s}f_{\theta_{0}}\lp X_{i}\rp\prod_{i=s+1}^{n}f_{\theta_{1}}\lp X_{i}\rp}{\sup_{\theta\in\mbb{R}}\prod_{i=1}^{n}f_{\theta}\lp X_{i}\rp}\rp.
\label{eq:GSR_stat}
\end{align}
%


In Theorem 2 of \cite{huang2025sequential},  GLR and GSR change detectors have been shown to satisfy
\begin{equation} \label{eq:GLRGSRp2}
d  \lesssim  \log M + \log (1/\delta_{\mathrm{F}}) + \log (1/\delta_{\mathrm{D}}).
\end{equation}
Given this characterization of the GLR and the GSR change detectors, Theorem \ref{thm:det_reg} now allows us to deduce the regret upper bound on DAB procedures that combine efficient stationary bandit algorithms, such as UCB and klUCB, with efficient change detectors, such as GLR and GSR change detectors.

Before we present our formal regret analysis, we take a  detour and search for parameters for  DAB procedures that yield a regret bound that is $\mcal{O}\lp\sqrt{A N_T T \log(T)}\rp$ according to Theorem \ref{thm:det_reg}, with a \emph{time-uniform} exploration policy, i.e., $\alpha_k =\alpha$, for all $k$.  This detour will serve as a prelude to the parameter choices made in Corollary \ref{cor:order_reg} in which the exploration frequency is non-uniform over time. Our choice of the $\sqrt{A N_T T \log(T)}$ rate is motivated by \cite{besson2022efficient}, where it is shown that this rate is achieved for a specific GLR-klUCB procedure, albeit with an analysis that is not modularized as in the current work. In addition, the rate of $\sqrt{A N_T T \log(T)}$ is also provably almost optimal, given the regret lower bound of $\Omega (\sqrt{A N_T T})$ in Section \ref{sec:prob} and in prior works, \cite{garivier2011upper, auer2019adswitch, seznec2020single}.

Consider a stationary bandit algorithm with regret bounded as $\mathcal{O}(\sqrt{A T\log(T)})$. Further, suppose that $\delta_{\mrm{F}} = \delta_{\mrm{D}} = T^{-\gamma}$ for some $\gamma > 1$. Then from \eqref{eq:GLRGSRp2}, we have that $d \lesssim \log(T)$ for the GLR and GSR change detectors, and for that matter, any change detector that satisfies Property~\ref{proper:cd}. Under the time-uniform exploration policy, we have from \eqref{eq:latency} that 
\[
d_k = \lce \frac{A}{\alpha_k} \rce d = \lce \frac{A}{\alpha} \rce d\lesssim \frac{A}{\alpha} \log (T).
\]
The terms in the regret upper bound of Theorem 1 then behave as follows for large $T$.
\begin{subequations}
\begin{align}
    C T A\lp N_{T} + 1\rp\delta_{\mrm{F}}
    &\simeq A T^{1-\gamma} N_T,
    \\
    C T N_{T} \delta_{\mrm{D}} 
    &\simeq T^{1-\gamma} N_T,
    \\
     \lp N_{T} + 1 \rp R_{\mcal{B}}\lp \frac{T}{N_{T} + 1} \rp 
    &\lesssim N_T \sqrt{A \frac{T}{N_T} \log\lp \frac{T}{N_T}\rp } = \sqrt{A {T}{N_T} \log\lp \frac{T}{N_T}\rp },\label{eq:11}
    \\
    C \bar{\alpha} T + C \lp N_T + 1 \rp A &\simeq \alpha T + A N_T, 
    \\
    C\sum_{k=1}^{N_T} d_k &\lesssim N_T \frac{A}{\alpha} \log(T).
\end{align}
\label{eq:thm1.const.alpha}
\end{subequations}
Can we choose $\alpha$ to achieve an overall $\mathcal{O}(\sqrt{A T N_T \log(T)})$ regret? 
Among the five terms, the first two do not matter as they go to zero with growing $T$, since  $N_T \leq T$ and $\gamma >1$.  The third term already satisfies the desired growth rate. The fourth term must satisfy
\begin{align}
    \alpha T \lesssim \sqrt{A N_T T \log (T)} \implies \alpha \lesssim \sqrt{ A N_T \log(T)/T}.
\end{align}
The fifth term must satisfy
\begin{align}
    N_T \frac{A}{\alpha} \log(T) \lesssim \sqrt{A N_T T \log (T)} 
    \implies \alpha \gtrsim \sqrt{ A N_T \log(T)/T}.
\end{align}
Interestingly, the last two conditions identify the required asymptotic growth rate of $\alpha$, i.e., that $\alpha \simeq \sqrt{A N_T\log(T)/T }$. However, implementation of such an algorithm requires the knowledge of $N_T$. To circumvent the same, a simple modification comes to the rescue, which is to replace the constant exploration policy with a non-uniform one, where $\alpha_k \simeq \sqrt{A k \log(T)/T}$ for the $k^{\text{th}}$ interval. 
The regret analysis for the non-uniform exploration policy follows along the same lines, and is formally encapsulated in the following result.

\begin{corollary}\label{cor:order_reg}
Consider  Procedure~\ref{alg:DAB} combining a stationary bandit algorithm $\mcal{B}$ with 
$R_{\mathcal{B}} = \mathcal{O}(\sqrt{A T\log(T)})$, and with a change detector $\mcal{D}$ that satisfies Property~\ref{proper:cd}, on a piecewise-stationary MAB problem. Suppose  \textup{$\delta_{\mrm{F}}=\delta_{\mrm{D}}=T^{-\gamma}$} for some $\gamma>1$, and $\alpha_{k} = \alpha_{0}\sqrt{k A \log\lp T\rp/T}$.
Then, if condition \eqref{eq:cond1} holds, $R_T \lesssim  \sqrt{A N_{T} T \log \lp T \rp} $.
\end{corollary}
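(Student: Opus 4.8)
The plan is to start from the five-term regret decomposition in Theorem~\ref{thm:det_reg} and substitute the prescribed parameters $\delta_{\mrm{F}} = \delta_{\mrm{D}} = T^{-\gamma}$ and $\alpha_{k} = \alpha_{0}\sqrt{kA\log(T)/T}$, then bound each term of \eqref{eq:det_reg} separately and show that the dominant ones are all of order $\sqrt{A N_{T} T \log(T)}$. The analysis mirrors the ``detour'' computation in \eqref{eq:thm1.const.alpha}, except that the time-varying $\alpha_{k}$ must be handled inside the summation over intervals. A preliminary observation is that, by Property~\ref{proper:cd}(ii) evaluated at $M = T$ with $\delta_{\mrm{F}} = \delta_{\mrm{D}} = T^{-\gamma}$, the latency obeys $d \lesssim \log T + 2\gamma \log T \lesssim \log T$, so that each $d_{k} = \lce A/\alpha_{k} \rce d \lesssim (A/\alpha_{k}) \log T$.

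First I would dispatch the negligible terms. The false-alarm and late-detection terms are $C T A (N_{T}+1)\delta_{\mrm{F}} \simeq A N_{T} T^{1-\gamma}$ and $C T N_{T} \delta_{\mrm{D}} \simeq N_{T} T^{1-\gamma}$; since $\gamma > 1$ and $N_{T} \leq T$, dividing either by $\sqrt{A N_{T} T \log T}$ gives a quantity that vanishes as $T \to \infty$, so both are $\lesssim \sqrt{A N_{T} T \log T}$. The round-off term $C(N_{T}+1)A \lesssim A N_{T}$ is likewise dominated, since $A N_{T}/\sqrt{A N_{T} T \log T} = \sqrt{A N_{T}/(T \log T)} \to 0$ using $N_{T} \leq T$. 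For the stationary-algorithm term, plugging in $R_{\mcal{B}}(T) = \mcal{O}(\sqrt{AT\log T})$ gives $(N_{T}+1) R_{\mcal{B}}\lp T/(N_{T}+1) \rp \lesssim \sqrt{A (N_{T}+1) T \log(T/(N_{T}+1))} \lesssim \sqrt{A N_{T} T \log T}$, which already attains the target rate.

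The crux is the interplay between the delay term $C\sum_{k=1}^{N_{T}} d_{k}$ and the forced-exploration term $C\bar{\alpha} T$, and this is where the non-uniform schedule pays off. Substituting $\alpha_{k} = \alpha_{0}\sqrt{kA\log(T)/T}$ gives $A/\alpha_{k} = \alpha_{0}^{-1}\sqrt{AT/(k\log T)}$, hence $d_{k} \lesssim \alpha_{0}^{-1}\sqrt{AT\log(T)/k}$, and summing over $k$ using $\sum_{k=1}^{N_{T}} k^{-1/2} \leq 1 + \int_{1}^{N_{T}} x^{-1/2}\,dx \lesssim \sqrt{N_{T}}$ yields $\sum_{k=1}^{N_{T}} d_{k} \lesssim \alpha_{0}^{-1}\sqrt{A N_{T} T \log T}$. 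For the exploration term, the schedule is increasing in $k$, so $\bar{\alpha} = \alpha_{N_{T}+1} = \alpha_{0}\sqrt{(N_{T}+1)A\log(T)/T}$, whence $\bar{\alpha} T = \alpha_{0}\sqrt{(N_{T}+1)A T \log T} \lesssim \alpha_{0}\sqrt{A N_{T} T \log T}$. Both contributions therefore match the desired rate, and combining all five bounds gives $R_{T} \lesssim \sqrt{A N_{T} T \log T}$.

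The main obstacle I anticipate is precisely verifying that the time-varying schedule simultaneously controls both the delay sum and the exploration cost without knowledge of $N_{T}$. The constant-$\alpha$ detour shows the optimal uniform rate is $\alpha \simeq \sqrt{A N_{T}\log(T)/T}$, which is unavailable in a prior-free setting, so the key is to confirm that replacing $N_{T}$ with the running index $k$ preserves the balance: the fact that $\sum_{k=1}^{N_{T}} k^{-1/2} \simeq \sqrt{N_{T}}$ makes the delay sum contribute the same order as if $\alpha$ were tuned to $N_{T}$, while $\bar{\alpha}$ being set by the final interval $k = N_{T}+1$ makes the exploration cost also tuned to $N_{T}$. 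A secondary point to check is that condition \eqref{eq:cond1} is what licenses the invocation of Theorem~\ref{thm:det_reg} in the first place, and that the $\lce\cdot\rce$ round-offs in $d_{k}$ and in the exploration count inflate the bounds only by lower-order additive terms already absorbed above.
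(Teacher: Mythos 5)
Your proposal is correct and takes essentially the same route as the paper: the paper's proof likewise plugs the prescribed parameters into the five-term bound of Theorem~\ref{thm:det_reg}, reuses the evaluations from \eqref{eq:thm1.const.alpha} for the negligible and bandit terms, and handles the two schedule-dependent terms exactly as you do, via $\bar{\alpha} = \alpha_{N_T+1} \simeq \sqrt{(N_T+1)A\log(T)/T}$ for the exploration cost and via $\sum_{k=1}^{N_T} d_k \lesssim \sum_{k=1}^{N_T} A\log(T)\sqrt{T/(Ak\log T)} \simeq \sqrt{A N_T T \log(T)}$ for the delay sum. Your supplementary checks (Property~\ref{proper:cd}(ii) yielding $d \lesssim \log T$, the role of condition \eqref{eq:cond1} in licensing Theorem~\ref{thm:det_reg}, and the lower-order ceiling round-offs) are all consistent with the paper's supporting computations.
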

%
\begin{proof}
The proof steps remain the same as in the uniform exploration case in \eqref{eq:thm1.const.alpha}, with the following two exceptions. First, we have
\begin{align}
    C \bar{\alpha} T + C \lp N_T + 1 \rp A \lesssim \sqrt{A N_T T \log(T)}.
\end{align}
Second, since $d_k$ varies as $Ad/\alpha_k$,  we obtain
\begin{align}
    C \sum_{k=1}^{N_{T}} d_{k} 
    \lesssim \sum_{k = 1}^{N_{T}} A \log(T) \sqrt{\frac{T}{A k \log \lp T \rp}} 
    = \sqrt{A N_{T} T \log \lp T \rp}.
\end{align}
This calculation yields the desired upper bound on the overall regret $R_T$.
\end{proof}
An important message we would like to emphasize is the necessity of forced exploration. 
Most good stationary bandit algorithms (such as klUCB and UCB) pull suboptimal arms at a logarithmic rate. Although the latencies for the GLR and GSR tests are $\mcal{O} \lp \log \lp T \rp \rp$, the DAB procedures constructed from these tests would need $\mcal{O} \lp T \rp$ time steps to obtain $\mcal{O} \lp \log \lp T \rp \rp$ samples for suboptimal arms \emph{without forced exploration}. This would make the delay $\mcal{O} \lp T \rp$ and thus lead to linear regret. To achieve order-optimal regret, we set $\alpha_{k} = \alpha_{0} \sqrt{k A \log \lp T \rp / T}$. This choice of $\alpha_{k}$ guarantees that the number of pulls from a suboptimal arm is $\Omega \lp \sqrt{T \log\lp T \rp} \rp$, making the delay sublinear. In addition, the regret due to delay and the regret due to forced exploration match the order of the regret due to stationary bandit algorithm, making the overall regret order-optimal (with an extra $\sqrt{\log T}$ factor). Finally, it is apparent from this discussion that a DAB procedure should not need forced exploration when the stationary bandit algorithm pulls suboptimal arms at a rate of the order $\sqrt{T \log \lp T \rp}$. The necessity of forced exploration is further corroborated by our experiments in Section \ref{sec:exp}. Consequently, one could deliberately make the stationary bandit algorithm learn at a slower rate to gather more information about the suboptimal arms. However, such a modification to the bandit algorithm is unnecessary and runs counter to the plug-and-play modularization of the analysis of the regret of DAB procedures, which is the main goal of this paper.

We end this analysis by delving deeper into the change-point separation condition in \eqref{eq:cond1}. Given the increasing nature of $\alpha_k$, we deduce that the right-hand side of  \eqref{eq:cond1} is bounded above as
\begin{align}
    d_{1} + m_{1} = \lceil A/\alpha_1\rceil \lp d\lp\underline{\Delta}_{\mrm{c}}, T\rp + m\lp\underline{\Delta}_{\mrm{c}}, T\rp\rp \lesssim \sqrt{T \log T}.
\end{align}
Let the minimum change-point separation be defined as
\begin{align}
    L_T \coloneqq \lp\min_{k\in [N_T]} \nu_k -\nu_{k-1}\rp.
\end{align}
If we assume $L_T \gtrsim \sqrt{T \log (T)}$, e.g., say $L_T = \lp \sqrt{T \log (T)}\rp^{1+\epsilon}$, for some small $\epsilon > 0$, then condition \eqref{eq:cond1} will hold asymptotically, i.e., for all sufficiently large $T$.
Thus, under the assumption, we deduce the following result.

\begin{corollary}\label{cor:order_reg_2}
Consider  Procedure~\ref{alg:DAB} combining a stationary bandit algorithm with 
$R_{\mathcal{B}} = \mathcal{O}(\sqrt{A T\log(T)})$, and with a change detector that satisfies Property~\ref{proper:cd}, on a piecewise-stationary MAB problem. Suppose  \textup{$\delta_{\mrm{F}}=\delta_{\mrm{D}}=T^{-\gamma}$} for some $\gamma>1$, and $\alpha_{k} = \alpha_{0}\sqrt{k A \log\lp T\rp/T}$.
Then, if $L_T \gtrsim \sqrt{T \log (T)}$, $R_T \lesssim  \sqrt{A N_{T} T \log \lp T \rp} $.
\end{corollary}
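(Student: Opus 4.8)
The plan is to reduce this statement to Corollary~\ref{cor:order_reg}: the two results share identical hypotheses on $\mcal{B}$, $\mcal{D}$, $\delta_{\mrm{F}}$, $\delta_{\mrm{D}}$, and the schedule $\alpha_k$, so it suffices to show that the geometric separation hypothesis $L_T \gtrsim \sqrt{T \log T}$ implies the abstract condition \eqref{eq:cond1} for all sufficiently large $T$. Once \eqref{eq:cond1} is verified, Corollary~\ref{cor:order_reg} delivers $R_T \lesssim \sqrt{A N_T T \log T}$ directly, with no further regret computation needed.

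The key step will be a bound on $\max_{k \in [N_T]} \lp d_{k-1} + m_k \rp$ that is uniform in $k$ and independent of $N_T$. Recall from \eqref{eq:pre-window} and \eqref{eq:latency} that $m_k = \lceil A/\alpha_k \rceil\, m(\underline{\Delta}_{\mrm{c}}, T)$ and $d_k = \lceil A/\alpha_k \rceil\, d(\underline{\Delta}_{\mrm{c}}, T)$, and that $\alpha_k = \alpha_0 \sqrt{k A \log(T)/T}$ is increasing in $k$. Hence $\lceil A/\alpha_k \rceil$ is non-increasing in $k$, so for every $k \in [N_T]$ both $\lceil A/\alpha_{k-1}\rceil$ and $\lceil A/\alpha_k \rceil$ are at most $\lceil A/\alpha_1 \rceil$ (the case $k=1$ being trivial since $d_0 = 0$). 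This yields $d_{k-1} + m_k \leq \lceil A/\alpha_1 \rceil \lp d(\underline{\Delta}_{\mrm{c}}, T) + m(\underline{\Delta}_{\mrm{c}}, T) \rp = d_1 + m_1$. With $M = T$ and $\delta_{\mrm{F}} = \delta_{\mrm{D}} = T^{-\gamma}$, Property~\ref{proper:cd}(ii) gives $m + d \lesssim \log T$, and since $A/\alpha_1 \simeq \sqrt{A T/\log T}$, I would conclude $d_{k-1} + m_k \leq d_1 + m_1 \lesssim \sqrt{T \log T}$ for all $k$.

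It then remains to compare this uniform upper bound with the interval lengths. By definition of $L_T$, every interval satisfies $\nu_k - \nu_{k-1} \geq L_T$. Reading $L_T \gtrsim \sqrt{T \log T}$ as strict super-$\sqrt{T\log T}$ growth, i.e.\ $L_T / \sqrt{T \log T} \to \infty$ (as in the example $L_T = \lp \sqrt{T \log T}\rp^{1+\epsilon}$), the ratio $L_T / (d_1 + m_1)$ diverges, so there is a $T_0$ such that for all $T \geq T_0$ one has $d_{k-1} + m_k \leq d_1 + m_1 \leq L_T \leq \nu_k - \nu_{k-1}$ for every $k \in [N_T]$. This establishes \eqref{eq:cond1} asymptotically, and invoking Corollary~\ref{cor:order_reg} completes the argument.

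The main subtlety I anticipate is the bookkeeping of the constants hidden in $\gtrsim$ and $\lesssim$: both the separation hypothesis and the bound on $d_1 + m_1$ are of order $\sqrt{T \log T}$, so matching only up to constant factors would not by itself guarantee $d_{k-1} + m_k \leq \nu_k - \nu_{k-1}$ for every $T$. This is exactly why $L_T \gtrsim \sqrt{T\log T}$ must be interpreted as strict asymptotic domination (the role of the $(1+\epsilon)$ exponent), forcing the inequality to hold for all large $T$ and hence \eqref{eq:cond1} to hold asymptotically. Beyond this, the argument is essentially a monotonicity observation plus a single application of the already-proven corollary.
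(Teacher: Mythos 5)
Your proposal is correct and follows essentially the same route as the paper: the paper also reduces Corollary~\ref{cor:order_reg_2} to Corollary~\ref{cor:order_reg} by noting that the increasing $\alpha_k$ makes $d_{k-1}+m_k \leq d_1+m_1 = \lceil A/\alpha_1\rceil\lp d\lp\underline{\Delta}_{\mrm{c}},T\rp+m\lp\underline{\Delta}_{\mrm{c}},T\rp\rp \lesssim \sqrt{T\log T}$ via Property~\ref{proper:cd}(ii), so that $L_T$ growing strictly faster than $\sqrt{T\log T}$ (as in the $(1+\epsilon)$ example) forces condition \eqref{eq:cond1} to hold for all sufficiently large $T$. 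Your explicit handling of the $k=1$ case through $d_0=0$ and your remark that $L_T \gtrsim \sqrt{T\log T}$ must be read as strict asymptotic domination (rather than matching only up to constants) are careful refinements of points the paper leaves implicit, but they do not constitute a different argument.
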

%
The constraint $L_T \gtrsim \sqrt{T \log (T)}$ restricts the number of change-points $N_{T}$,
\begin{align}
    N_{T} \leq \frac{T}{L_{T}} \lesssim \sqrt{T/\log(T)}.
\end{align}
We argue why this is an interesting regime for a PS-MAB environment. Notice that if $N_T \simeq T$ and even if a bandit algorithm takes just a single sample to readjust to the new stationary environment, the regret grows $\mcal{O}(N_T) = \mcal{O}(T)$. In other words, such PS-MAB environments pose an impossible challenge that cannot be surmounted without altering the regret definition in \eqref{eq:regret}. If $N_T \simeq \mcal{O}(1)$, then the intervals over which the environment is stationary are far too long, and this would potentially allow even a stationary bandit algorithm enough time to readjust to a changed reward distribution through its in-built exploration strategy. We posit that  $1 \lesssim N_T \lesssim T$ \emph{is} the interesting regime for learning in PS settings. The asymptotic analysis of Corollary \ref{cor:order_reg_2} has $N_T \simeq L_T \simeq \sqrt{T}$ as the boundary case, i.e., minimum (resp. maximum) allowable rate for $L_T$ (resp. $N_T$) is $\sqrt{T}$. Our analysis is made under the premise that \emph{all} change-points are detected with high probability. In practice, one expects a change-detector to possibly make some errors in detection of change-points, but ``catch up'' within a few change-points. We observe this in our simulations in Section \ref{sec:exp}. When $N_T \gtrsim \sqrt{T}$, missing a constant number of change points results in $\mcal{O}(\sqrt{T})$ regret from the intervals corresponding to the missed change-points, provided the stationary bandit algorithm is fast enough to adapt. As a result, we expect the overall regret rate to hold, even when one violates the separation assumption made in our analysis. A precise characterization of regret with shorter intervals requires one to analyze the behavior of the change detection statistic after multiple possible changes and bound the probability of ``cascades'', where missing one change-point leads to continued errors in change-detection downstream. Such a task is left for future work.



\section{Experimental Study}
\label{sec:exp}
In our experiments, we performed numerical simulations on synthetic data. These simulations aimed to compare the efficacy of our approach by combining different change detectors and stationary bandit algorithms. We also benchmarked our proposed DAB procedures against representative methods from prior works. Finally, we examined the detection performance of our approach, along with an analysis of the implications of condition \eqref{eq:cond1}.
\subsection{Experimental Benchmark}
    
We designed a PS-MAB environment with \emph{stochastic} change-points, where the intervals between change-points are i.i.d.\ geometric random variables. Such a stochastic model naturally introduces variability in the placement of change-points, enabling  evaluation across a broad range of scenarios. This helps assess performance in environments where change-points occur unpredictably as one would expect in practice.
Specifically, we simulated environments with number of arms $A=5$, the horizon $T=100000$, and the intervals between change-points are
i.i.d. geometric with parameter $\rho$, where $\rho=T^{-\xi}$, for $\xi \in \lbp 0.3,0.4,0.5,0.6,0.7,0.8 \rbp$. The expected number of change-points is $\rho T = T^{1-\xi}$, which is sublinear in $T$ for all the values of $\xi$'s considered. The regret is averaged over $2000$ independent trials. In addition, rewards are in $[0,1]$, while the magnitude of the change is uniformly sampled from $[0.1,0.4]$.

\subsection{Algorithms and Parameters}
    To demonstrate the modularity of our method, different change detectors are combined with various bandit algorithms. For the change detectors, we utilize both the Bernoulli and Gaussian variants of the GSR and GLR tests. The stationary bandit algorithms  include UCB, MOSS \cite{audibert2009minimax} and klUCB.
    Among prior approaches, we consider both methods designed for non-stationary settings and those developed for stationary environments. Among the non-stationary methods, we include M-klUCB \cite{cao2019nearly} and CUSUM-klUCB \cite{liu2018change}, which require prior knowledge of the non-stationarity, as well as GLR-klUCB \cite{besson2022efficient} with global restarts (in both Bernoulli and Gaussian variants), MASTER \cite{wei2021non}, and our own approach, all of which operate without such knowledge. For stationary baselines, we include klUCB \cite{garivier2011kl} and UCB \cite{auer2002finite}, both without change detection. We exclude AdSwitch \cite{auer2019adswitch} due to its significant computational overhead and its demonstrated poor empirical performance (even worse than stationary klUCB) \cite{besson2022efficient}. All methods are parameterized following their original works. Finally, we emphasize a key distinction: in \cite{besson2022efficient}, GLR-klUCB employs a shared history between the klUCB algorithm and the GLR test, whereas in our DAB procedure the klUCB component does not use the forced exploration samples. Regarding the forced exploration frequencies, \cite{besson2022efficient} suggests scaling the exploration frequency as 
    $\alpha_k=0.1 \sqrt{k A \log(T)/T}$. In the DAB procedures, choosing the exploration frequency parameter $\alpha_0=0.05$ yielded better performance. To this end, we inspect the effect of changes in $\alpha_0$ in section \ref{subsec:exp_res}.
\subsection{Practical Tuning of QCD Tests}
    Our simulations show that the threshold function provided in \eqref{eq:GLR_beta} is  conservative. Thereupon, to mitigate this issue in practice as done in \cite{besson2022efficient}, we set $\beta(n,\delta_{\mrm{F}})=\log{\left(4n^{3/2}/\delta_{\mrm{F}}\right)}$ and perform auxiliary down-sampling. Specifically, the GLR and GSR tests are computationally intensive, and for practical implementation we conduct the GLR test every 10 time steps and examine every 5 observations for a potential change-point. In contrast, we apply the GSR test every 10 time steps only, as further down-sampling is not possible for the observations. Finally, regarding the selection of the parameter $\gamma$ for $\delta_{\mrm{F}}$ and $\delta_{\mrm{D}}$,  we found that choosing $\gamma > 1$ for our DAB procedures tends to be conservative. Thus, to ensure a fair comparison across methods like GLR-klUCB, we set $\gamma = 1/2$ in our experiments.

\subsection{Experimental Results}
\label{subsec:exp_res}
\begin{figure*}[h]
\begin{center}
    \begin{subfigure}[b]{0.49\linewidth}
       \includegraphics[draft=false,width=\linewidth]{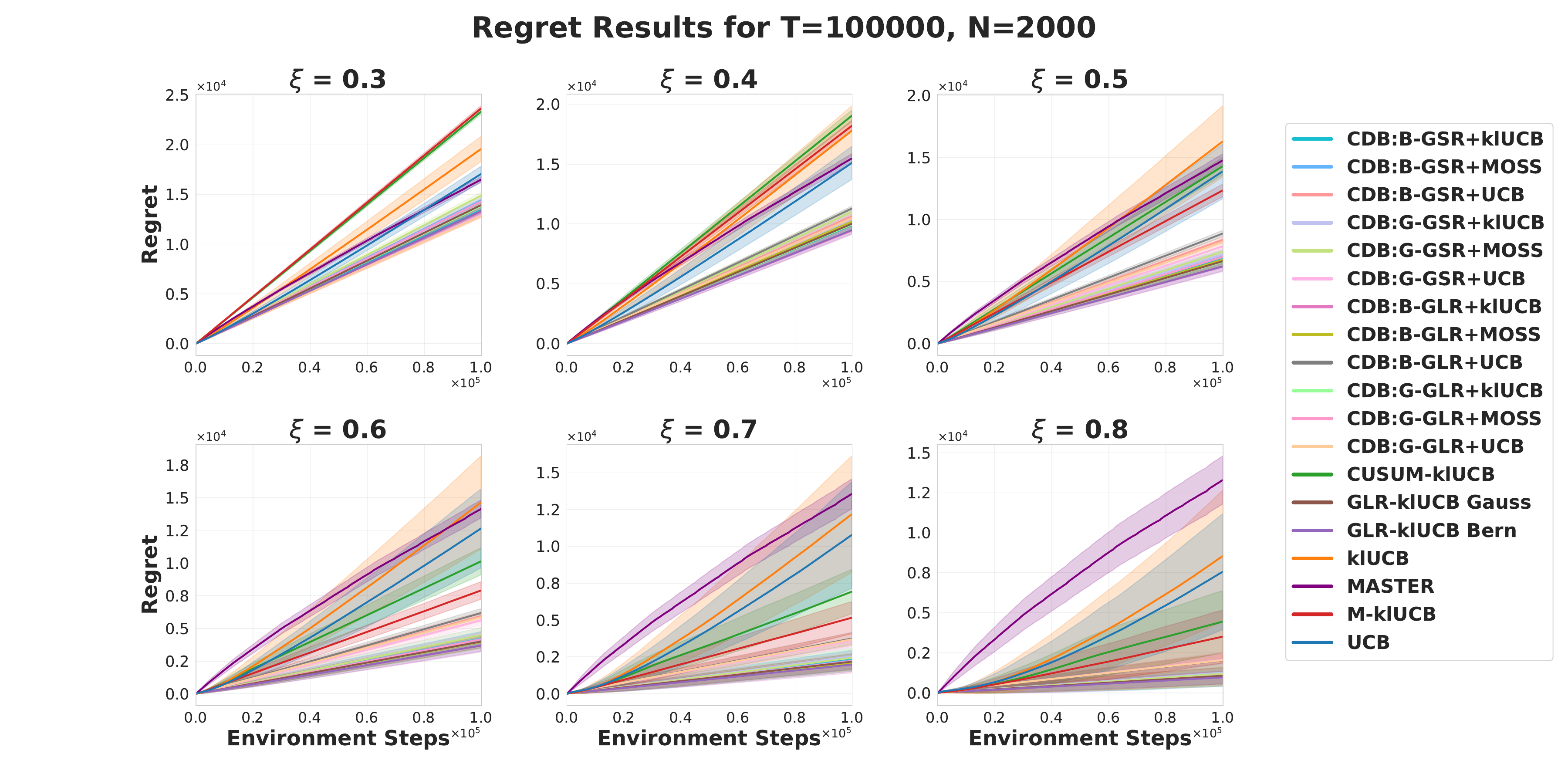}
        \caption{Comparison of all methods of Table \ref{table:final_regret}.}
        \label{fig:all_methods}
    \end{subfigure}
    \begin{subfigure}[b]{0.49\linewidth}
        \includegraphics[draft=false,width=\linewidth]{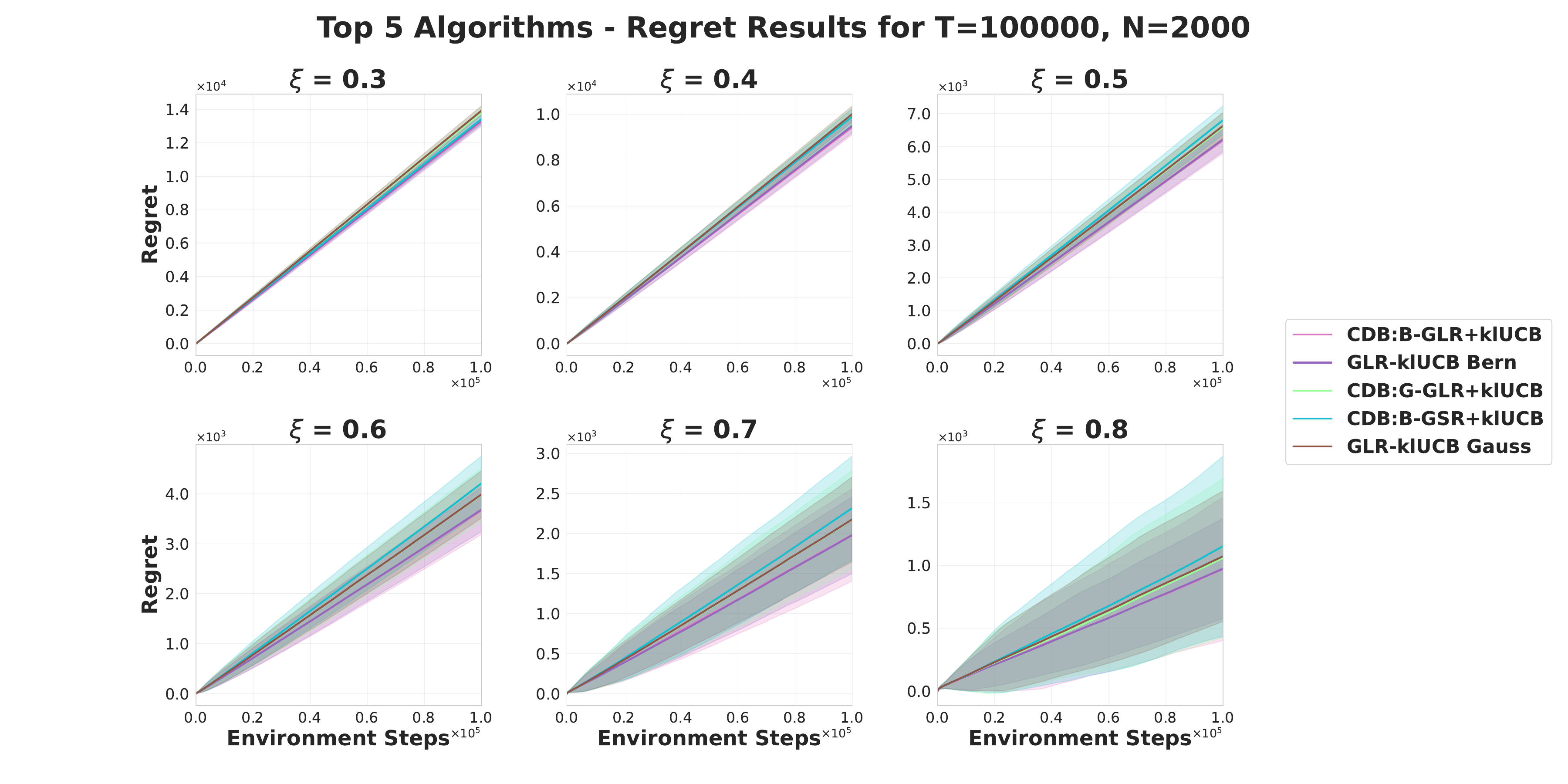}
        \caption{Comparison of five best performing methods in Figure \ref{fig:all_methods}.}
        \label{fig:best_methods}
    \end{subfigure}
\end{center}
\caption{Regret plots versus the time steps for $T=100000$, averaged over $2000$ runs.}
\label{fig:reg_results}
\end{figure*}

\begin{table}[h]
\caption{Final Regret Results (Mean $\pm$ Std), $T=100000$, 2000 Runs.}
\label{table:final_regret}
\scriptsize
\setlength{\tabcolsep}{3pt}
\begin{center}
\begin{tabular}{l|cccccc}
\hline
\multicolumn{1}{l|}{\bf Algorithm} 
& \multicolumn{1}{c}{\bf $\xi=0.3$} 
& \multicolumn{1}{c}{\bf $\xi=0.4$} 
& \multicolumn{1}{c}{\bf $\xi=0.5$} 
& \multicolumn{1}{c}{\bf $\xi=0.6$} 
& \multicolumn{1}{c}{\bf $\xi=0.7$} 
& \multicolumn{1}{c}{\bf $\xi=0.8$} \\
\hline
{UCB} & 17043.56 $\pm$ 775.01 & 15089.05 $\pm$ 1385.42 & 13876.87 $\pm$ 2206.86 & 12634.53 $\pm$ 3021.33 & 10768.53 $\pm$ 3605.60 & 7563.37 $\pm$ 3613.44 \\
{klUCB} & 19556.07 $\pm$ 1269.81 & 17783.91 $\pm$ 2078.54 & 16292.08 $\pm$ 2873.23 & 14625.86 $\pm$ 3549.95 & 12180.54 $\pm$ 3947.70 & 8540.80 $\pm$ 4101.70 \\
{CUSUM-klUCB} & 23309.39 $\pm$ 269.66 & 19029.19 $\pm$ 475.09 & 14294.10 $\pm$ 709.21 & 10119.63 $\pm$ 1029.34 & 6922.56 $\pm$ 1506.58 & 4455.29 $\pm$ 1929.40 \\
{M-klUCB} & 23635.16 $\pm$ 271.90 & 18186.66 $\pm$ 448.59 & 12342.55 $\pm$ 508.37 & 7890.39 $\pm$ 676.83 & 5157.24 $\pm$ 1093.28 & 3513.25 $\pm$ 1658.05 \\
{MASTER} & 16481.34 $\pm$ 298.54 & 15466.47 $\pm$ 389.90 & 14764.19 $\pm$ 511.71 & 14121.37 $\pm$ 684.32 & 13555.03 $\pm$ 1013.75 & 13289.08 $\pm$ 1501.69 \\
{GLR-klUCB Gauss} & 13915.86 $\pm$ 286.75 & 10005.02 $\pm$ 351.70 & 6633.46 $\pm$ 407.41 & 3987.30 $\pm$ 470.06 & 2177.19 $\pm$ 529.03 & 1073.65 $\pm$ 519.08 \\
{GLR-klUCB Bern} & 13334.08 $\pm$ 273.42 & 9489.89 $\pm$ 322.31 & 6224.82 $\pm$ 373.51 & 3682.48 $\pm$ 438.88 & 1982.95$\pm$ 576.81 & 975.58 $\pm$ 399.63 \\
{DAB:B-GSR+klUCB} & 13408.14 $\pm$ 278.97 & 9887.04 $\pm$ 344.98 & 6806.19 $\pm$ 431.44 & 4207.55 $\pm$ 547.14 & 2318.38 $\pm$ 648.37 & 1153.98 $\pm$ 717.81 \\
{DAB:B-GSR+MOSS} & 14406.85 $\pm$ 307.12 & 10601.24 $\pm$ 394.05 & 7091.65 $\pm$ 457.77 & 4197.40 $\pm$ 542.00 & 2267.44 $\pm$ 651.33 & 1114.43 $\pm$ 708.70 \\
{DAB:B-GSR+UCB} & 12877.56 $\pm$ 192.78 & 10634.64 $\pm$ 225.79 & 8367.51 $\pm$ 269.72 & 5940.87 $\pm$ 342.93 & 3709.38 $\pm$ 428.07 & 2048.91 $\pm$ 436.55 \\
{DAB:G-GSR+klUCB} & 13979.56 $\pm$ 319.87 & 10387.42 $\pm$ 376.16 & 7177.38 $\pm$ 458.10 & 4497.57 $\pm$ 570.29 & 2543.21 $\pm$ 709.72 & 1270.88 $\pm$ 825.29 \\
{DAB:G-GSR+MOSS} & 14869.79 $\pm$ 338.75 & 10994.70 $\pm$ 402.17 & 7405.52 $\pm$ 484.74 & 4484.91 $\pm$ 578.97 & 2435.58 $\pm$ 667.76 & 1196.88 $\pm$ 695.87 \\
{DAB:G-GSR+UCB} & 12892.65 $\pm$ 231.79 & 10192.56 $\pm$ 254.93 & 7836.69 $\pm$ 306.90 & 5613.96 $\pm$ 367.54 & 3610.45 $\pm$ 430.64 & 2068.68 $\pm$ 464.58 \\
{DAB:B-GLR+klUCB} & 13246.57 $\pm$ 270.41 & \textbf{9428.66 $\pm$ 336.55} & \textbf{6191.07 $\pm$ 397.37} & \textbf{3670.76 $\pm$ 483.81} & \textbf{1980.95 $\pm$ 478.34} & \textbf{972.39 $\pm$ 369.76} \\
{DAB:B-GLR+MOSS} & 13885.32 $\pm$ 261.77 & 10186.61 $\pm$ 322.99 & 6749.52 $\pm$ 389.74 & 3951.04 $\pm$ 474.68 & 2067.48 $\pm$ 489.53 & 1007.97 $\pm$ 551.67 \\
{DAB:B-GLR+UCB} & 13316.54 $\pm$ 175.51 & 11276.16 $\pm$ 208.69 & 8866.54 $\pm$ 255.54 & 6165.72 $\pm$ 341.18 & 3744.94 $\pm$ 410.86 & 2029.76 $\pm$ 418.76 \\
{DAB:G-GLR+klUCB} & 13824.25 $\pm$ 301.19 & 9938.51 $\pm$ 352.11 & 6591.21 $\pm$ 427.18 & 3979.95 $\pm$ 531.41 & 2173.97 $\pm$ 616.32 & 1059.89 $\pm$ 635.49 \\
{DAB:G-GLR+MOSS} & 14287.89 $\pm$ 291.64 & 10488.16 $\pm$ 359.74 & 6989.79 $\pm$ 443.02 & 4161.08 $\pm$ 517.49 & 2218.64 $\pm$ 580.43 & 1080.51 $\pm$ 584.17 \\
{DAB:G-GLR+UCB} & \textbf{12807.55 $\pm$ 184.81} & 10476.87 $\pm$ 226.95 & 8198.70 $\pm$ 281.85 & 5853.98 $\pm$ 343.80 & 3687.73 $\pm$ 417.37 & 2045.31 $\pm$ 427.10 \\
\hline
\end{tabular}
\end{center}
\end{table}

\paragraph{Regret Performance} From the results in Table~\ref{table:final_regret} and Figure~\ref{fig:reg_results}, our modularized DAB procedures appear to compare favorably with the state-of-art GLR-klUCB approach of \cite{besson2022efficient} in terms of regret minimization, and surpass the latter's performance in many instances. The efficacy of our modularized approach is closely tied to the chosen bandit algorithm. This observation aligns with our expectations and it underscores the flexibility in our design where the choice of the bandit algorithm can be tailored to the specific application and available computational resources. For example, employing UCB or MOSS as the bandit algorithm can offer a speedup compared to employing klUCB,  while at times it outperforms in terms of the regret (see, e.g., UCB for $\xi = 0.3$ in Table \ref{table:final_regret}). 

For the DAB procedure, the variance of the regret depends on  the selection of the stationary bandit algorithm, with UCB exhibiting the lowest value. Even though GLR-klUCB consistently delivers robust results in terms of variance, the range of the mean regret for the adaptive DAB procedure with Bernoulli GLR test and klUCB is lower compared to GLR-klUCB.

A noteworthy observation is that UCB without change detection (row 1) consistently outperforms klUCB without change detection (row 2). While this may seem counterintuitive, given that klUCB performs better in stationary settings, it has been shown that UCB pulls suboptimal arms more frequently as the horizon increases \cite{garivier2011kl}. In a non-stationary setting, this higher rate of exploration enables better adaptation to changes, which also explains why the DAB procedure achieves the lowest regret with UCB under high non-stationarity $(\xi=0.3)$. 

A final key result concerns the performance of the DAB procedures in scenarios with high non-stationarity. Specifically, when $\xi \leq 0.5$, condition \eqref{eq:cond1} is likely to fail often due to the high average number of change-points within the horizon. Remarkably, even under such challenging conditions, the DAB procedures demonstrate excellent performance. To summarize, it is evident that our DAB procedure exhibits strong performance across varying rates of non-stationarity. While GLR-klUCB provides consistently reliable results across settings, our modularized approach offers a flexible, efficient, and equivalently optimal alternative when the degree of the non-stationarity is unknown. In what follows, we take a deeper dive into the detection performance of our DAB procedures.

\paragraph{Detection Performance} 

To assess the effectiveness of our method in detecting changes, we evaluate three aspects: the accuracy of identifying change-points, the detection delay, and the cascading errors that may occur when a change-point is missed in terms of additional missed changes.

\begin{table}[h]
\caption{Average Numbers of Change-Points (CP), Detections (D), True Detections (TD) at $\xi\in\{0.3,0.8\}$ and Average Delay (time steps, averaged over $\xi\in\{0.3,0.4,0.5,0.6,0.7,0.8\}$); $T=100000$, 2000 Runs.}
\label{table:detections_td_with_avg_delay}
\footnotesize
\setlength{\tabcolsep}{5pt}
\centering
\begin{tabular}{l|cc|cc|c}
\hline
\multirow{3}{*}{\textbf{Algorithm}} & \multicolumn{2}{c|}{\textbf{$\xi = 0.3$}} & \multicolumn{2}{c|}{\textbf{$\xi = 0.8$}} & \textbf{Avg.\ Delay} \\
& \multicolumn{2}{c|}{\textbf{CP}: 3163.6} & \multicolumn{2}{c|}{\textbf{CP}: 9.6} & \textbf{(over $\xi$)} \\
\cline{2-5}
& \textbf{D} & \textbf{TD} & \textbf{D} & \textbf{TD} & \\
\hline
DAB:B-GSR+klUCB  & 105.8 & 105.8 &  8.0 &  7.3 & 25.3 \\
DAB:B-GSR+MOSS   & 117.4 & 117.4 &  8.0 &  7.3 & 24.8 \\
DAB:B-GSR+UCB    &  99.0 &  99.0 &  9.0 &  8.2 & 25.5 \\
DAB:G-GSR+klUCB  &  76.4 &  76.4 &  7.2 &  6.6 & 24.4 \\
DAB:G-GSR+MOSS   &  90.7 &  90.7 &  7.1 &  6.6 & 24.8 \\
DAB:G-GSR+UCB    &  59.7 &  59.7 &  8.3 &  7.6 & 24.7 \\
DAB:B-GLR+klUCB  & 175.7 & 175.7 &  8.5 &  7.7 & 24.5 \\
DAB:B-GLR+MOSS   & 180.3 & 180.3 &  8.6 &  7.9 & 24.8 \\
DAB:B-GLR+UCB    & 180.1 & 180.1 &  9.3 &  8.5 & 25.2 \\
DAB:G-GLR+klUCB  & 127.9 & 127.9 &  7.7 &  7.1 & 24.5 \\
DAB:G-GLR+MOSS   & 131.2 & 131.2 &  7.7 &  7.1 & 25.1 \\
DAB:G-GLR+UCB    & 106.3 & 106.3 &  8.9 &  8.1 & 24.9 \\
\hline
\end{tabular}
\end{table}

We first report the average number of detections and true detections for the two extreme cases of non-stationarity ($\xi=0.3$ and $\xi=0.8$) in Table~\ref{table:detections_td_with_avg_delay}. When $\xi \leq 0.5$, the level of non-stationarity is high enough that the average spacing between change-points falls below 320 time steps, making condition~\eqref{eq:cond1} unlikely to hold. Under the most extreme setting ($\xi=0.3$), the average number of change-points is $3163.6$, and our approach detects only $2-5.7\%$ of them. This result is expected: with an average spacing of just 32 time steps, there are typically too few samples to enable reliable detection. Nevertheless, as shown in Figure~\ref{fig:reg_results} and Table~\ref{table:final_regret}, our method still achieves strong overall performance despite the low detection rate.

In contrast, in the mildest case of non-stationarity ($\xi=0.8$), the average number of change-points drops to $9.6$, and our approach achieves up to $88.5\%$ detection accuracy. This improvement stems from the much larger spacing between change-points (approximately 10,200 time steps), which allows sufficient samples for reliable detection. Interestingly, however, a higher detection rate does not necessarily correspond to the best overall performance (cf. B-GLR+UCB vs.\ B-GLR+klUCB).

Finally, we examine the detection delay, measured only from successful detections. Averaged across runs and all six values of $\xi$, the delay is consistently around 25 time steps, with small variance across $\xi$. Moreover, the delay is comparable across all algorithms, highlighting the robustness of our detection mechanism.

\begin{table}[h]
\caption{Missed Change Points: Count Until Next Detection, $T=100000$, 2000 Runs.}
\label{table:missed_count_and_distance_until_next}
\scriptsize
\setlength{\tabcolsep}{3pt}
\begin{center}
\begin{tabular}{l|cccccc}
\hline
\textbf{Algorithm} & \textbf{$\xi = 0.3$} & \textbf{$\xi = 0.4$} & \textbf{$\xi = 0.5$} & \textbf{$\xi = 0.6$} & \textbf{$\xi = 0.7$} & \textbf{$\xi = 0.8$} \\
 & \textbf{CP dist: 32} & \textbf{CP dist: 100} & \textbf{CP dist: 316} & \textbf{CP dist: 1004} & \textbf{CP dist: 3177} & \textbf{CP dist: 10197} \\
\hline
DAB:B-GSR+klUCB  & 29.8 &  8.1 & 3.8 & 2.6 & 2.2 & 2.0 \\
DAB:B-GSR+MOSS   & 26.8 &  7.7 & 3.8 & 2.6 & 2.2 & 2.1 \\
DAB:B-GSR+UCB    & 32.0 &  7.5 & 3.4 & 2.3 & 2.0 & 2.0 \\
DAB:G-GSR+klUCB  & 41.6 & 10.8 & 4.6 & 2.9 & 2.3 & 2.1 \\
DAB:G-GSR+MOSS   & 34.9 & 10.0 & 4.5 & 2.9 & 2.3 & 2.1 \\
DAB:G-GSR+UCB    & 53.6 & 11.3 & 4.4 & 2.6 & 2.1 & 2.0 \\
DAB:B-GLR+klUCB  & 17.7 &  5.9 & 3.3 & 2.4 & 2.1 & 2.0 \\
DAB:B-GLR+MOSS   & 17.2 &  5.7 & 3.2 & 2.4 & 2.1 & 2.0 \\
DAB:B-GLR+UCB    & 17.2 &  5.1 & 2.8 & 2.1 & 2.0 & 2.0 \\
DAB:G-GLR+klUCB  & 24.5 &  7.8 & 3.9 & 2.7 & 2.2 & 2.1 \\
DAB:G-GLR+MOSS   & 23.9 &  7.6 & 3.8 & 2.7 & 2.2 & 2.1 \\
DAB:G-GLR+UCB    & 29.7 &  7.6 & 3.4 & 2.3 & 2.0 & 2.0 \\
\hline
\end{tabular}
\end{center}
\end{table}

We now study the effect of a missed change-point on subsequent time steps. In our theoretical analysis, we conservatively assumed that if a change-point is missed (with probability $\delta_D$), the regret would grow linearly for the remainder of the horizon, since no further detections would be made. This assumption simplifies the analysis but ignores the possibility that the algorithm may recover and detect later changes. To examine this possible ``cascade'' empirically, we measure the number of consecutive change-points missed on average before the next detection.

As shown in Table~\ref{table:missed_count_and_distance_until_next}, in regimes where condition~\eqref{eq:cond1} is likely violated ($\xi \leq 0.5$), the number of missed change-points can be higher, especially at $\xi=0.3$, where the average spacing is just 32 time steps. In contrast, when $\xi > 0.5$, the number of missed change-points stabilizes close to 2 across all algorithms. This finding indicates that the conservative linear-regret from a missed detection assumption overestimates the practical impact said misses---in practice, the detection mechanism is able to recover quickly, preventing a full cascade of missed change-points.

\begin{remark} In Section \ref{sec:DAB_cor}, we showed that the proposed DAB procedures are order-optimal under condition \eqref{eq:cond1}. However, for our geometric change-point model, condition \eqref{eq:cond1} may not always hold. While condition \eqref{eq:cond1} is required to facilitate the theoretical analysis, the failure of this condition does not appear to adversely affect the regret, as we see in the simulations. This is because in the theoretical analysis, missing a single change-point causes the derived regret bound to be linear, whereas in reality, the change-detector is able to quickly recover from this error and keep the regret down by restarting the bandit algorithm after subsequent changes.
\end{remark}

\paragraph{On the Necessity of Forced Exploration}
\label{para:force_exp}
As a final study, we investigate the role of \emph{forced exploration} in DAB procedures. 
From the results presented earlier, setting $\alpha_0 = 0.05$ consistently outperformed $\alpha_0 = 0.1$, suggesting that smaller (and possibly zero) exploration rates may be preferable in practice---a trend also observed in prior work~\cite{besson2022efficient}. 
To further examine this hypothesis, we compare the best-performing DAB variant, (DAB:B-GLR+klUCB), under two settings---$\alpha_0 = 0.05$ and $\alpha_0 = 0$---for $T = 10^7$ and $\xi = 0.7$, using the same evaluation metrics as the experiments before.

\begin{table}[h]
\caption{Combined results for $\xi = 0.7$, $T = 10{,}000{,}000$, averaged over 20 runs.}
\label{table:combined_results_xi07}
\footnotesize
\setlength{\tabcolsep}{5pt}
\begin{center}
\begin{tabular}{l|ccccc}
\hline
\multicolumn{6}{c}{\textbf{CP:} 126.2 \quad \textbf{CP dist:} 79215} \\
\hline
\textbf{Algorithm} & \textbf{Final Regret} & \textbf{Detections} & \textbf{True Det.} & \textbf{Missed Cnt.} & \textbf{Avg. Delay} \\
\hline
DAB:B-GLR+klUCB ($\alpha_0 = 0.05$) & 20{,}728.81 $\pm$ 3{,}531.26 & 121.8 & 121.8 & 1.0 & 35.2 \\
DAB:B-GLR+klUCB ($\alpha_0 = 0$)    & 22{,}259.86 $\pm$ 6{,}532.66 & 120.9 & 120.9 & 1.0 & 35.2 \\
\hline
\end{tabular}
\end{center}
\end{table}

Table~\ref{table:combined_results_xi07} highlights two key observations. 
First, the detection performance of DAB improves markedly for the given non-stationarity level $\xi = 0.7$. 
Detection accuracy rises to 96.55\% at $T = 10^7$ (from 70.15\% at $T = 10^5$), while the false-alarm rate drops to 0\% (from 2.3\%). 
Moreover, the average number of missed change-points after an unsuccessful detection decreases from 2 to 1. These results confirm our theoretical analysis that larger horizons $T$ facilitate more reliable change detection. 
As expected, the average detection delay increases with $T$, in line with our asymptotic delay bounds. Second, comparing the two exploration levels reveals that a nonzero forced exploration ($\alpha_0 > 0$) remains beneficial in the large-horizon regime. 
While both settings achieve nearly identical detection statistics and delays, the version with $\alpha_0 = 0.05$ attains \emph{lower dynamic regret} and slightly more detections. 
This confirms that forced exploration mitigates potential detection failures that can cascade across change-points. Overall, these findings suggest that although forced exploration may be unnecessary for moderate horizons its presence is advantageous for large horizons, providing a consistent safeguard that enhances the robustness and long-term performance of DAB procedures.

\section{Conclusions}
\label{sec:sum}
In this paper, we studied DAB procedures, a collection of procedures for PS-MABs that meaningfully marry  efficient stationary MAB algorithms with efficient change detectors. We first established the lower bounds on the performance of PS-MABs, introducing a novel bound for the instance-dependent case and sharpening prior art for the instance-independent case. Then, we introduced our modular DAB procedure, and identified the requirements needed for the change detector and the bandit algorithm, showcasing existing detectors and bandit algorithms that satisfy these conditions. While a good learning algorithm for the stationary variant should quickly learn an optimal arm and favor pulling said arm over time, detection of changes in the reward structures in other arms requires pulling sub-optimal arms often enough (forced exploration) for change detectors to function effectively. The key contribution of this work is a modular approach to the regret analysis of DAB procedures. Using this modular approach, along with an appropriately designed forced exploration policy, we derived order optimal regret bounds (up to logarithmic factors) for an array of DAB procedures. Our experiments demonstrated the efficacy and robustness of these DAB procedures for PS-MABs both in terms of regret and detection accuracy.

Our contributions are centered on PS-MABs, where the changes occur abruptly and with sufficiently low frequency. It is clearly of interest to extend these results to more general non-stationary MABs with possibly slowly-changing environments. The work in \cite{wei2021non} provided a general procedure for a general class of non-stationary reinforcement learning problems that can tackle both piecewise stationary environments and slowly-changing ones. However, it was recently shown that the regret bound given in \cite{wei2021non}, while being order optimal (disregarding polylog factors),  is loose for a critical range of finite horizons \cite{gerogiannis2024ipfnsrlopt}. Furthermore, it was shown in \cite{gerogiannis2024ipfnsrlopt} that 
for the (special) case of PS-MABs, the procedure given in \cite{wei2021non} performs poorly compared to DAB procedures. 
Therefore, developing efficient procedures for slowly changing non-stationary MAB environments remains largely an open problem. 


\bibliographystyle{IEEEtran}
\bibliography{TIT.bib}

\appendices

\section{Proof of Theorems \ref{thm:reg_low_inst_dep} and \ref{thm:reg_low_minimax}}
\label{sec:proof_lower_bound}

To prove the lower bounds in Theorems \ref{thm:reg_low_inst_dep} and \ref{thm:reg_low_minimax}, we create $2^{N_{T} + 1}$ PS-MAB instances and show that the average regret of these instances is lower bounded by the right-hand side terms in \eqref{eq:reg_low_inst_dep} and \eqref{eq:reg_low_minimax}. 
We first prove Theorem \ref{thm:reg_low_inst_dep} and then use an intermediate step in the proof of Theorem \ref{thm:reg_low_inst_dep} to derive
Theorem \ref{thm:reg_low_minimax}.

Fix an arbitrary policy $\pi$. We construct $2^{N_{T} + 1}$ PS-MAB instances $\lbp B_{j}: j \in \lbp 0, 1 \rbp^{N_{T} + 1} \rbp$, each indexed by an $\lp N_{T} + 1 \rp$-dimensional binary vector $j \coloneqq \lb j_{1} \quad j_{2} \quad \cdots \quad j_{N_{T} + 1} \rb$.
Under each instance $B_{j}$, the reward samples are mutually independent and Gaussian with variance $\sigma^{2}$.
Each instance has $A$ arms with at most $N_{T}$ changes over a horizon of length $T$. Recall that  we assume $T$ is divisible by $N_{T} + 1$.
The horizon is partitioned into $N_{T} + 1$ stationary intervals with equal length. 
More specifically, let $\mcal{I}_{k} \coloneqq \lbp \nu_{k-1}, \dots, \nu_{k} - 1 \rbp$ be the $k^{\mrm{th}}$ stationary interval. Then, for each $k \in \lb N_{T} + 1 \rb$,
\begin{equation}\label{eq:stationary-inv}
    \nu_{k} = k\lp\frac{T}{N_{T}+1} \rp + 1.
\end{equation}
%
Let $\Pr_{j,\pi}$ and $\E_{j,\pi}$ denote the probability measure and expectation in bandit instance $B_{j}$ with policy $\pi$ for $j \in \lbp 0, 1 \rbp^{N_{T} + 1}$. Furthermore, let $\mu_{a, k}$ denote the mean reward of arm $a$ during $\mcal{I}_{k}$ and $n_{a, k}$ denote the number of pulls of arm $a$ during $\mcal{I}_{k}$ for $a \in \lb A \rb$ and $k \in \lb N_{T}+1 \rb$. 
For the purpose of mean reward assignment in bandit instances, define $a_{j,k}$ to be the arm expected to be pulled the least (excluding arm $1$) during interval $\mcal{I}_{k}$ in bandit instance $\mcal{B}_{j}$, i.e., 
\begin{equation}\label{eq:ajk}
    a_{j,k} = \argmin_{a \neq 1}\E_{j,\pi} \lb n_{a,k} \rb.
\end{equation}
Starting from the bandit instance with all-zero index vector, we set the mean reward of arm $1$ to be $\Delta$ and those of other arms to be $0$ for all intervals, i.e., $\mu_{1, k} = \Delta$ and $\mu_{a, k} = 0$ for $a \in \lbp 2, \dots, A \rbp$ and $k \in \lb N_{T} + 1 \rb$. 
Then, we proceed to bandit instances with one-hot vectors $B_{j}$. To do so, we first run the policy $\pi$ over the bandit instance indexed by all zeroes. Then, for interval $\mcal{I}_{k}$ with $j_{k} = 0$, we set $\mu_{1, k} = \Delta$ and $\mu_{a, k} = 0$ for $a \in \lbp 2, \dots, A \rbp$, the same as in the bandit with the all-zero vector index. For interval $\mcal{I}_{k}$ with $j_{k} = 1$, we set $\mu_{1, k} = \Delta$, $\mu_{a_{i,k}, k} = 2\Delta$, and $\mu_{a, k} = 0$ for $a \notin \lbp 1, a_{i,k} \rbp$, where $i$ denotes the all-zero vector.
Next, we assign the mean rewards in bandit instances with index vectors with two $1$'s as follows. We begin by running $\pi$ on the bandit instances with one-hot vectors as indices. Then, for interval $\mcal{I}_{k}$ with $j_{k} = 0$, we set the mean rewards as in the bandit indexed by all-zero vector. For interval $\mcal{I}_{k}$ with $j_{k} = 1$, we set $\mu_{1, k} = \Delta$, $\mu_{a_{i,k}, k} = 2\Delta$, and $\mu_{a, k} = 0$ for $a \notin \lbp 1, a_{i,k} \rbp$, where $i$ is the index vector that differs with $j$ only at the $k^{\mrm{th}}$ bit. This completes the construction of the bandit instances indexed by two-hot vectors. Continue this process until the mean rewards are determined in the bandit instance with the all-one vector as its index.
%
%
The intuition behind the construction is as follows. In this mean assignment process, consider two instances that differ only at the $k$-th interval. Among these two, the most neglected suboptimal arm in the bandit instance with $j_k=0$ is made optimal in the instance with $j_k=1$. Consequently, 
if $\pi$ performed well on the $k$-th interval in one of the instances, it will incur significant regret over the same interval in the other instance.
The construction of the bandit instances with $N_{T}=2$ is illustrated in Figure \ref{fig:b_constr}.
Also note that for any two bandit instances with index vectors $j$ and $j'$ sharing the same first $k$ bits, the mean rewards before $\nu_{k}$ are identical, and hence, the regret performance of any causal policy $\pi$ up to $\nu_k$ must also coincide.
%
\begin{figure}[ht]
    \centering
    \includegraphics[width=1\linewidth]{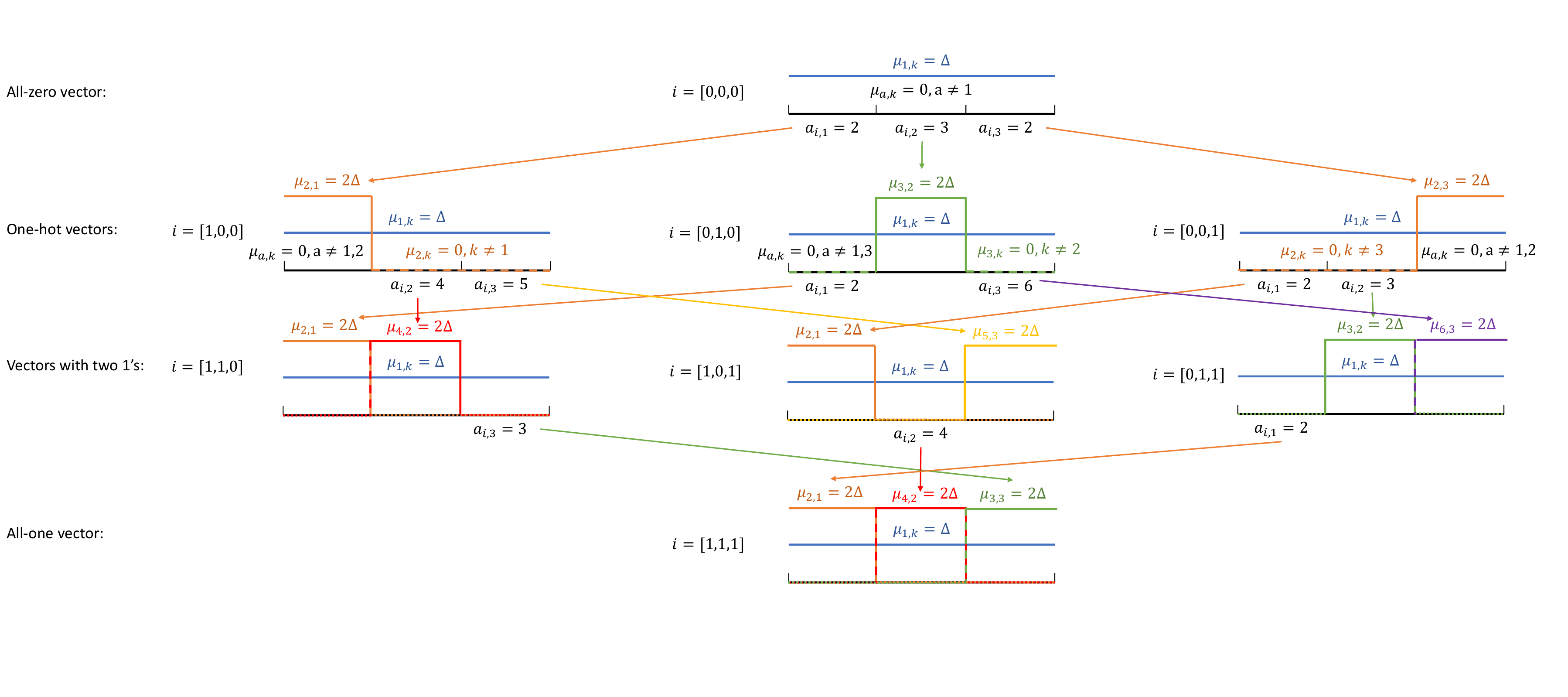}
    \caption{The mean reward assignment process of the bandit instances with $N_{T}=2$.}
    \label{fig:b_constr}
\end{figure}
Now, fix an arbitrary $i \in \lbp 0, 1 \rbp^{N_{T} + 1}$ and $k \in \lb N_{T} + 1 \rb$. We use $R_{j,k}$ to denote the regret (of policy $\pi$) over  interval $\mcal{I}_{k}$ in bandit instance $B_{j}$, and $i$ is the binary vector that differs from $j$ at position $k$. Without loss of generality, we assume that $i_{k} = 0$ and $j_{k} = 1$. Since the optimal arm is arm $1$ and the suboptimality gap for each suboptimal arm is $\Delta$ over interval $\mcal{I}_{k}$ in $B_{i}$, we have
%
\begin{equation}
\begin{split}
    R_{i,k} &= \Delta \E_{i,\pi} \lb \lba \mcal{I}_{k} \rba - n_{1,k} \rb. \label{eq:Rik-1}
\end{split}
\end{equation}
Let $M\coloneqq\lba \mcal{I}_{k} \rba=T/\lp N_{T} + 1 \rp$. Then,
\begin{equation}
\begin{split}
    R_{i,k} &\geq \Delta \E_{i,\pi} \lb \lp M - n_{1,k} \rp \Indc{n_{1,k} \leq \frac{M}{2}} \rb \\
    &\geq \Delta \E_{i,\pi} \lb \frac{M}{2} \Indc{n_{1,k} \leq \frac{M}{2}} \rb \\
    &= \Delta \frac{M}{2}\Pr_{i,\pi} \lp n_{1,k} \leq \frac{M}{2} \rp. \label{eq:Rik}
\end{split}
\end{equation}
Similarly, since the suboptimality gap is $\Delta$ for arm $1$ and those are $2\Delta$ for other suboptimal arms, we have
\begin{equation}
\begin{split}
    R_{j,k} &= \Delta \E_{j,\pi} \lb n_{1,k} \rb + \sum_{a=2, a \neq a_{j,k}}^{A} 2\Delta \E_{j,\pi} \lb n_{a,k} \rb,
\end{split}
\end{equation}
and thus,
\begin{equation}
\begin{split}
    R_{j,k} &\geq \Delta \E_{j,\pi} \lb n_{1,k} \rb \\
    &\geq \Delta \E_{j,\pi} \lb n_{1,k} \Indc{n_{1,k} > \frac{M}{2}} \rb \\
    &\geq \Delta \E_{j,\pi} \lb \frac{M}{2} \Indc{n_{1,k} > \frac{M}{2}} \rb \\
    &= \Delta \frac{M}{2}\Pr_{j,\pi} \lp n_{1,k} > \frac{M}{2} \rp \label{eq:Rjk}.
\end{split}
\end{equation}
Let $i' \coloneqq \lb i_{1} \quad \cdots \quad i_{k} \quad 0 \quad \cdots \quad 0 \rb$ and $j' \coloneqq \lb j_{1} \quad \cdots \quad j_{k} \quad 0 \quad \cdots \quad 0 \rb$. Combining \eqref{eq:Rik} and \eqref{eq:Rjk}, we obtain
\begin{equation}
\begin{split}\label{eq:prob_low_bd}
    R_{i,k} + R_{j,k} &\geq  \frac{\Delta}{2} M  \lb \Pr_{i,\pi} \lp n_{1,k} \leq \frac{\nu_{k} - \nu_{k - 1}}{2} \rp + \Pr_{j,\pi} \lp n_{1,k} > \frac{M}{2} \rp \rb \\
    &\overset{(a)}{=}\frac{\Delta}{2} M \lb \Pr_{i',\pi} \lp n_{1,k} \leq \frac{M}{2} \rp + \Pr_{j',\pi} \lp n_{1,k} > \frac{M}{2} \rp \rb \\
    &\overset{(b)}{\geq} \frac{\Delta}{4} M \exp \lp -\mathrm{kl} \lp \Pr_{i',\pi} || \Pr_{j',\pi} \rp \rp 
\end{split}
\end{equation}
where $\mathrm{kl}$ denotes the KL-divergence. In step $(a)$, we apply change of measure, as $n_{1,k}$ is measurable with respect to reward and action samples from the first $k$ stationary intervals, and $\Pr_{i,\pi}=\Pr_{i',\pi}$ and $\Pr_{j,\pi}=\Pr_{j',\pi}$ during the first $k$ stationary intervals. In step $(b)$, we leverage Bretagnolle-Huber inequality \cite{bretagnolle1979estimation}. In the following inequality, we slightly abuse  notation and use $\mu_{i', t}(a)$ (and $\mu_{j', t}(a)$) to denote the mean reward of arm $a$ at time $t$ in bandit instance $B_{i'}$ (and $B_{j'}$). The KL-divergence between $\Pr_{i',\pi}$ and $\Pr_{j',\pi}$ can be computed as follows:
\begin{align}
    &\mathrm{kl} \lp \Pr_{i',\pi} || \Pr_{j',\pi} \rp \nonumber\\
    &=\int_{\mathbf{x} \in \mathbb{R}^{T}, \mathbf{a} \in \lb A \rb^{T}} \log \lp \frac{\prod_{t = 1}^{T} \pi \lp a_{t} | x_{1}, a_{1}, \dots, x_{t-1}, \dots, a_{t-1} \rp \exp \lp -\frac{1}{2\sigma^{2}}\lp x_{t} - \mu_{i', t}\lp a_{t} \rp \rp^{2}\rp}{\prod_{t = 1}^{T} \pi \lp a_{t} | x_{1}, a_{1}, \dots, x_{t-1}, \dots, a_{t-1} \rp \exp \lp -\frac{1}{2\sigma^{2}}\lp x_{t} - \mu_{j', t}\lp a_{t} \rp \rp^{2}\rp} \rp d\Pr_{i', \pi}\lp \mathbf{x}, \mathbf{a} \rp \nonumber\\
    &=\int_{\mathbf{x} \in \mathbb{R}^{T}, \mathbf{a} \in \lb A \rb^{T}} \frac{1}{2\sigma^{2}} \sum_{t=1}^{T} \lb\lp x_{t} - \mu_{j', t}\lp a_{t} \rp \rp^{2}-\lp x_{t} - \mu_{i', t}\lp a_{t} \rp \rp^{2}\rb d\Pr_{i', \pi}\lp \mathbf{x}, \mathbf{a} \rp \nonumber\\
    &=\frac{1}{2\sigma^{2}} \sum_{t=1}^{T}\int_{\mathbf{x} \in \mathbb{R}^{T}, \mathbf{a} \in \lb A \rb^{T}} \lp \mu_{i', t}\lp a_{t} \rp - \mu_{j', t}\lp a_{t} \rp \rp \lp 2x_{t} - \mu_{i', t}\lp a_{t} \rp - \mu_{j', t}\lp a_{t} \rp \rp d\Pr_{i', \pi}\lp \mathbf{x}, \mathbf{a} \rp \nonumber\\
    &\overset{(a)}{=}-\frac{\Delta}{\sigma^{2}} \sum_{t=\nu_{k-1}}^{\nu_{k}-1}\int_{\mathbf{x} \in \mathbb{R}^{T}, \mathbf{a} \in \lb A \rb^{T}} \lp 2x_{t} - 2\Delta \rp \Indc{a_{t} = a_{i,k}} d\Pr_{i', \pi}\lp \mathbf{x}, \mathbf{a} \rp \nonumber\\
    &=-\frac{\Delta}{\sigma^{2}} \sum_{t=\nu_{k-1}}^{\nu_{k}-1} \E_{i', \pi} \lb \lp 2X_{t} - 2\Delta \rp \Indc{A_{t} = a_{i,k}} \rb \nonumber\\
    &\overset{(b)}{=}-\frac{\Delta}{\sigma^{2}} \sum_{t=\nu_{k-1}}^{\nu_{k}-1} \E_{i, \pi} \lb \lp 2X_{t} - 2\Delta \rp \Indc{A_{t} = a_{i,k}} \rb \nonumber\\
    &=-\frac{\Delta}{\sigma^{2}} \sum_{t=\nu_{k-1}}^{\nu_{k}-1} \E_{i, \pi} \lb \E_{i, \pi} \lb 2X_{t} - 2\Delta | A_{t} = a_{i,k} \rb \Indc{A_{t} = a_{i,k}} \rb \nonumber\\
    &\overset{(c)}{=}\frac{2\Delta^{2}}{\sigma^{2}} \sum_{t=\nu_{k-1}}^{\nu_{k}-1} \E_{i, \pi} \lb \Indc{A_{t} = a_{i,k}} \rb \nonumber\\
    &=\frac{2\Delta^{2}}{\sigma^{2}}  \E_{i, \pi} \lb n_{a_{i,k},k} \rb\label{eq:kl_upp}
\end{align}
%
By the causality of the policy, the reward distributions of $B_{i'}$ and those of $B_{j'}$ are the same except for those of arm $a_{i,k}$ during the $k^{\mrm{th}}$ stationary interval in step $(a)$. 
In step $(b)$, we apply change of measure, as $X_{t}$ and $A_{t}$ in the summation are measurable with respect to reward and action samples from the first $k$ stationary intervals, and $\Pr_{i,\pi}=\Pr_{i',\pi}$ during the first $k$ stationary intervals. 
In step $(c)$, because $i_{k} = 0$, we have $\E_{i, \pi} \lb X_{t} | A_{t} = a_{i,k} \rb = 0$.
Then, by applying \eqref{eq:kl_upp} to \eqref{eq:prob_low_bd}, we get
\begin{equation}\label{eq:intv_reg_low_inst_dep}
\begin{split}
    R_{i,k} + R_{j,k} \geq \frac{\Delta}{4} M \exp \lp -\frac{2\Delta^{2}}{\sigma^{2}} \E_{i, \pi} \lb n_{a_{i,k},k} \rb \rp.
\end{split}
\end{equation}
Rearranging the terms in \eqref{eq:intv_reg_low_inst_dep}, we obtain
\begin{equation}\label{eq:intv_reg_low_inst_dep_2}
\begin{split}
    \E_{i, \pi} \lb n_{a_{i,k},k} \rb \geq \frac{\sigma^{2}}{2\Delta^{2}} \log \lp \frac{\Delta}{4}\frac{M}{R_{i,k} + R_{j,k}} \rp = \frac{\sigma^{2}}{2\Delta^{2}} \log \lp \frac{\Delta}{4}\frac{T/\lp N_{T} + 1\rp }{R_{i,k} + R_{j,k}} \rp.
\end{split}
\end{equation}
Recall that $R_{T} \leq cT^{p}$ for some $c>0$ and $p\in\lb 0,1 \rp$. Since $R_{i,k}, R_{j,k} \leq R_{T}$, we derive
\begin{equation}\label{eq:intv_reg_low_inst_dep_3}
\begin{split}
    \E_{i, \pi} \lb n_{a_{i,k},k} \rb  &\geq \frac{\sigma^{2}}{2\Delta^{2}} \lb \log  \frac{\Delta}{4} + \log \frac{T}{N_{T} + 1} - \log \lp 2KT^{p} \rp \rb \\
    &= \frac{\sigma^{2}}{2\Delta^{2}} \lb \log  \frac{\Delta}{8c} + \lp 1 - p \rp \log T - \log \lp N_{T} + 1 \rp \rb.
\end{split}
\end{equation}
Thus, the regret on bandit instance $B_{i}$ with the all-zero index vector $i$ is lower bounded as follows:
\begin{align}
\begin{aligned}
    R_{T} &= \sum_{k=1}^{N_{T} + 1} R_{i,k} \\
    &= \sum_{k=1}^{N_{T} + 1} \sum_{a=2}^{A} \Delta \E_{i, \pi} \lb n_{a,k} \rb \\
    &\overset{(a)}{\geq} \sum_{k=1}^{N_{T} + 1} \sum_{a=2}^{A} \Delta \E_{i, \pi} \lb n_{a_{i,k},k} \rb \\
    &\overset{(b)}{\geq} \frac{\sigma^{2}}{2\Delta} \lp N_{T}+1 \rp \lp A - 1 \rp \lb \log  \frac{\Delta}{8c} + \lp 1 - p \rp \log T - \log \lp N_{T} + 1 \rp \rb,
\end{aligned}
\end{align}
where step $(a)$ results from the definition of $a_{i,k}$ in \eqref{eq:ajk}, and step $(b)$ follows from \eqref{eq:intv_reg_low_inst_dep_3}. Note that when deriving the lower bound on $\E_{i, \pi} \lb n_{a_{i,k},k} \rb$ in \eqref{eq:intv_reg_low_inst_dep_2}, the index vector $j$ corresponds to the one-hot vector with the $k^{\mrm{th}}$ bit being $1$. This concludes the proof of Theorem \ref{thm:reg_low_inst_dep}.

To prove Theorem \ref{thm:reg_low_minimax}, we apply the definition of $a_{i,k}$ into \eqref{eq:kl_upp} and obtain
\begin{align}\label{eq:kl_upp_minimax}
    &\mathrm{kl} \lp \Pr_{i',\pi} || \Pr_{j',\pi} \rp=\frac{2\Delta^{2}}{\sigma^{2}} \E_{i, \pi} \lb n_{a_{i,k},k} \rb\overset{(a)}{\leq}\frac{2\Delta^{2}}{\sigma^{2}} \frac{\nu_{k} - \nu_{k-1}}{A-1} = \frac{2\Delta^{2}}{\sigma^{2}} \frac{M}{A-1}.
\end{align}
In step $(a)$, we apply the fact that $\E_{i, \pi} \lb n_{a_{i,k},k} \rb \leq \frac{\nu_{k} - \nu_{k-1}}{A - 1}$, as $a_{i,k} = \argmin_{a \neq 1}\E_{i,\pi} \lb n_{a,k} \rb$. By plugging $\Delta = \sigma\sqrt{\lp A-1 \rp/\lp 4 M \rp}$ into \eqref{eq:kl_upp_minimax}, we have
\begin{align}
    R_{i,k} + R_{j,k} \geq \frac{\sigma}{8e^{1/2}}\sqrt{\lp A-1 \rp M}.
\end{align}
Since the mapping $i \to j$ is bijective and that there are $2^{N_{T}}$ pairs of bandit instances $\lp B_{i}, B_{j} \rp$, we can derive
\begin{align}
    \sum_{i \in \lbp 0,1 \rbp^{N_{T} + 1}} R_{i,k} \geq 2^{N_{T}} \frac{\sigma}{8e^{1/2}}\sqrt{\lp A-1 \rp M}.
\end{align}
Therefore,
\begin{align}
    2^{-\lp N_{T} + 1 \rp} \sum_{i \in \lbp 0,1 \rbp^{N_{T} + 1}} \sum_{k \in \lb N_{T} + 1 \rb} R_{i,k} \geq \frac{\sigma}{16e^{1/2}} \sum_{k \in \lb N_{T} + 1 \rb} \sqrt{\lp A-1 \rp M}
\end{align}
Consequently, there exists a bandit instance $B_{i}$ such that 
\begin{align}
    \sum_{k \in \lb N_{T} + 1 \rb} R_{i,k} \geq \sum_{k \in \lb N_{T} + 1 \rb} \frac{\sigma}{16e^{1/2}}\sqrt{\lp A-1 \rp M} \geq \frac{\sigma}{27}\sqrt{\lp A-1 \rp T \lp N_{T} + 1 \rp}
\end{align}
This concludes the proof of Theorem \ref{thm:reg_low_minimax}.

\section{Proof of Theorem \ref{thm:det_reg}}
\label{sec:thm1}

Consider a PS-MAB environment satisfying Condition \eqref{eq:cond1}, and recall that $\nu_{0} \coloneqq 1$ and $\nu_{N_{T} + 1} \coloneqq T + 1$. We define the following events:
\begin{align}
\mathcal{G}_{k} &\coloneqq\left\{ \forall\,l\in [k-1],\;\tau_{l}\in\lbp \nu_{l}, \dots, \nu_{l} + \ell_{l} - 1\rbp\right\} \cap \lbp \tau_{k} > \nu_{k} \rbp,\, k \in \lb N_{T} \rb. \label{eq:good_event_k}
\end{align}
The event $\mathcal{G}_{k}$ represents the ``good event" up to the $k^{\mathrm{th}}$ detection point $\mcal{G}_{k}$ in which the first $k$ changes are detected within the latency.
For notational convenience, we define $\mcal{G}_{0}$ to be the universal space. The event $\mcal{G}_{3}$ is illustrated in Figure \ref{fig:E_event}.

\begin{figure}[h]
    \centering
    \begin{tikzpicture}[scale=0.5]

        \draw (-14,1)-- (6,1); 
        \node at (-15,1) {$\mcal{G}_{3}$:};
        
        \draw (-14,1.5) -- (-14,0.5);
        \node at (-14,2) {$\nu_{0}$};
        
        \draw (-12,0.5) -- (-12,1.5);
        \node at (-12,2) {$\nu_{1}$};

        \draw (-10.5,1.5) -- (-10.5,1);
        \node at (-10.5,2) {$\tau_{1}$};
        
        \draw[color=blue] (-10,0.5) -- (-10,1);
        \node at (-11,0.2) {{\color{blue} $d_{1}$}};
        
        \draw (-8,0.5) -- (-8,1.5);
        \node at (-8,2) {$\nu_{2}$};

        \draw (-6.8,1.5) -- (-6.8,1);
        \node at (-6.8,2) {$\tau_{2}$};
        
        \draw[color=blue] (-5.2,0.5) -- (-5.2,1);
        \node at (-6.6,0.2) {{\color{blue} $d_{2}$}};
        
        \draw (-2,0.5) -- (-2,1.5);
        \node at (-2,2) {$\nu_{3}$};

        \draw (0.5,1.5) -- (0.5,1);
        \node at (0.5,2) {$\tau_{3}$};
        
        \draw[color=blue] (2,0.5) -- (2,1);
        \node at (0,0.2) {{\color{blue} $d_{3}$}};

        \node at (8,1) {$\cdots$};

        \draw (-14,-4)-- (6,-4); 
        \node at (-16.5,-3.5) {$\mcal{G}^{c}_{3}$:};
        \node at (-16.5,-4.5) {(false alarm)};
        
        \draw (-14,-3.5) -- (-14,-4.5);
        \node at (-14,-3) {$\nu_{0}$};
        
        \draw (-12,-4.5) -- (-12,-3.5);
        \node at (-12,-3) {$\nu_{1}$};

        \draw (-10.5,-3.5) -- (-10.5,-4);
        \node at (-10.5,-3) {$\tau_{1}$};
        
        \draw[color=blue] (-10,-4.5) -- (-10,-4);
        \node at (-11,-4.8) {{\color{blue} $d_{1}$}};
        
        \draw (-8,-3.5) -- (-8,-4.5);
        \node at (-8,-3) {$\nu_{2}$};

        \draw (-6.8,-3.5) -- (-6.8,-4);
        \node at (-6.8,-3) {$\tau_{2}$};
        
        \draw[color=blue] (-5.2,-4.5) -- (-5.2,-4);
        \node at (-6.6,-4.8) {{\color{blue} $d_{2}$}};
        
        \draw (-2,-4.5) -- (-2,-3.5);
        \node at (-2,-3) {$\nu_{3}$};

        \draw[color=red] (-3.5,-3.5) -- (-3.5,-4);
        \node at (-3.5,-3) {{\color{red} $\tau_{3}$}};
        
        \draw[color=blue] (2,-4.5) -- (2,-4);
        \node at (0,-4.8) {{\color{blue} $d_{3}$}};

        \node at (8,-4) {$\cdots$};

        \draw (-14,-9)-- (6,-9); 
        \node at (-16.5,-8.5) {$\mcal{G}^{c}_{3}$:};
        \node at (-16.5,-9.5) {(late detection)};
        
        \draw (-14,-8.5) -- (-14,-9.5);
        \node at (-14,-8) {$\nu_{0}$};
        
        \draw (-12,-9.5) -- (-12,-8.5);
        \node at (-12,-8) {$\nu_{1}$};

        \draw (-10.5,-8.5) -- (-10.5,-9);
        \node at (-10.5,-8) {$\tau_{1}$};
        
        \draw[color=blue] (-10,-9.5) -- (-10,-9);
        \node at (-11,-9.8) {{\color{blue} $d_{1}$}};
        
        \draw (-8,-8.5) -- (-8,-9.5);
        \node at (-8,-8) {$\nu_{2}$};

        \draw[color=red] (-4,-8.5) -- (-4,-9);
        \node at (-4,-8) {{\color{red} $\tau_{2}$}};
        
        \draw[color=blue] (-5.2,-9.5) -- (-5.2,-9);
        \node at (-6.6,-9.8) {{\color{blue} $d_{2}$}};
        
        \draw (-2,-9.5) -- (-2,-8.5);
        \node at (-2,-8) {$\nu_{3}$};

        \draw (0,-8.5) -- (0,-9);
        \node at (0,-8) {$\tau_{3}$};
        
        \draw[color=blue] (2,-9.5) -- (2,-9);
        \node at (0,-9.8) {{\color{blue} $d_{3}$}};

        \node at (8,-9) {$\cdots$};
        
    \end{tikzpicture}
    \caption{Illustration of the event $\mcal{G}$}
    \label{fig:E_event}
\end{figure}
In Figure~\ref{fig:E_event}, the second (false alarm) instance belongs to $\mcal{G}_{2}$, whereas the third (late detection) instance belongs to $\mcal{G}_{1}$. Then, we have the following:
\begin{align}
\begin{aligned}
    R_{T}  & =\E \lb \sum_{k = 1}^{N_{T} + 1} \sum_{t=\nu_{k-1}}^{\nu_{k}-1} \Delta_{a_t,k} \rb \\ 
    & = \sum_{k = 1}^{N_{T} + 1} \E \lb \sum_{t=\nu_{k-1}}^{\nu_{k}-1} \Delta_{a_t,k} \rb \\ 
    &=\sum_{k = 1}^{N_{T} + 1} \lp \Pr \lp \mcal{G}^{c}_{k} \rp \E \lb \sum_{t = \nu_{k-1}}^{\nu_{k} - 1} \Delta_{a_t,k}  \Bigg| \mcal{G}^{c}_{k}\rb 
    + \E \lb \Indc{\mcal{G}_{k}} \sum_{t = \nu_{k-1}}^{\nu_{k} - 1} \Delta_{a_t,k} \rb \rp\\ 
    &\overset{(a)}{\leq} \sum_{k = 1}^{N_{T} + 1} \lp
    C \lp \nu_{k} - \nu_{k-1} \rp \Pr \lp \mcal{G}^{c}_{k} \rp + \E \lb \Indc{\mcal{G}_{k}} \sum_{t = \nu_{k-1}}^{\nu_{k} - 1} \Delta_{a_t, k} \rb \rp\label{eq:reg_decomp}
\end{aligned}
\end{align}
where step $(a)$ results from the fact that $\Delta_{a, k} \leq  C $ for any $k \in \mbb{N}$ and $a \in \lb A \rb$. For convenience in the proof of the upper bound on the probability of bad event $\Pr\lp\mcal{G}^{c}_{k} \rp$, define
\begin{align}
\mathcal{E}_{k} &\coloneqq\left\{ \forall\,l\in [k-1],\;\tau_{l}\in\lbp \nu_{l}, \dots, \nu_{l} + \ell_{l} - 1\rbp\right\},\, k \in \lb N_{T} \rb. \label{eq:good_event_E_k}
\end{align}
$\Pr\lp\mcal{G}^{c}_{k} \rp$ is upper bounded by the following modified union bound, which decomposes the bad event into false alarm events and late detection events:
\begin{align}
    \Pr \lp \mcal{G}^{c}_{k} \rp &= \Pr \lp \lbp \exists\, l \in [k-1],\; \tau_{l} \notin \lbp \nu_{l}, \dots, \nu_{l} + d_{l} - 1 \rbp \rbp \cup \lbp \tau_{k} \leq \nu_{k} \rbp  \rp \nonumber\\ 
    &=\sum_{l = 1}^{k-1} \Pr \lp \tau_{l} \notin \lbp \nu_{s}, \dots, \nu_{l} + d_{l} - 1 \rbp, \mcal{E}_{l-1} \rp  + \Pr \lp \tau_{k} \leq \nu_{k}, \mcal{E}_{k-1} \rp\nonumber\\ 
    &=\sum_{l = 1}^{k-1} \Pr \lp \mcal{E}_{l-1} \rp \Pr \lp \tau_{l} \notin \lbp \nu_{l}, \dots, \nu_{l} + d_{l} - 1 \rbp \big| \mcal{E}_{l-1} \rp + \Pr \lp \mcal{E}_{k-1} \rp \Pr \lp \tau_{k} \leq \nu_{k} \big| \mcal{E}_{k-1} \rp\nonumber\\ 
    &\overset{(a)}{\leq} \sum_{l = 1}^{k-1} \Pr \lp \tau_{l} \notin \lbp \nu_{l}, \dots, \nu_{l}+d_{l} - 1 \rbp \big| \mcal{E}_{l-1} \rp + \Pr \lp \tau_{k} \leq \nu_{k} \big| \mcal{E}_{k-1} \rp\nonumber\\
    &= \sum_{l = 1}^{k} \underbrace{\Pr \lp \tau_{l} < \nu_{l} \big| \mcal{E}_{l-1} \rp}_{\Phi_{1}} + \sum_{l = 1}^{k-1} \underbrace{\Pr \lp \tau_{l} \geq \nu_{l} + d_{l} \big| \mcal{E}_{l-1} \rp}_{\Phi_{2}}\label{eq:bad_union_thm1}
\end{align}
where $(a)$ is due to the fact that $\Pr \lbp \mcal{E}_{k-1} \rbp \leq 1$. We then separately bound $\Phi_{1}$ and $\Phi_{2}$.

$\bullet$ \emph{Upper-Bounding $\Phi_1$:} Recall that $H_{a, \mcal{D}}$ is the change detector history list associated with arm $a$, and that $\tau_{0}$ is defined to be $0$. 
For any $a\in[A]$ and $t>\tau_{l-1}$, we  define $n_{a}\lp t\rp$ to be the number of time-steps between $\tau_{l-1}+1$ and $t$ at which $\lp t - \tau_{l-1} -1 \mod \lce A/\alpha_{l} \rce \rp + 1$ equals $a$, which is the number of samples obtained due to force exploration and added in the history $H_{a,\mathcal{D}}$ if there are no restarts after $\tau_{l-1}$, i.e.,
\begin{align}
    n_{a}\lp t\rp\coloneqq\sum_{s=\tau_{ l-1}+1}^{t}\Indc{\lp t - \tau_{l-1} -1 \mod \lce \frac{A}{\alpha_{l}} \rce \rp + 1 = a}.\label{eq:pull}
\end{align}
We also use $\tau_{a,l}$ to denote the stopping time of the change detector associated with arm $a \in [A]$ after the $(l-1)^{\mrm{th}}$ detection point $\tau_{l-1}$. The stopping time $\tau_{a,l}$ operates independent from other stopping times, and does not stop if other stopping times get triggered earlier.
Then, for all $l\in [N_{T}+1]$, we have
\begin{align}
\begin{aligned}
    \Pr \lp \tau_{l} < \nu_{l} \big| \mcal{E}_{l-1} \rp &= \Pr \lp \exists\, a \in \lb  A \rb, \tau_{a,l} \in \lb n_{a} \lp \nu_{l} - 1 \rp \rb \big| \mcal{E}_{l-1} \rp \\
    &\overset{(a)}{\leq}\sum_{a = 1}^{A} \Pr \lp \tau_{a,l} \in \lb n_{a} \lp \nu_{l} - 1 \rp \rb \big| \mcal{E}_{l-1} \rp \\
    &\overset{(b)}{\leq}\sum_{a=1}^{A}\Pr_{\infty}\lp\tau_{a,l} \leq T\rp\\
    &\overset{(c)}{\leq}\sum_{a=1}^{A}\delta_{\mrm{F}}\\
    &=A\delta_{\mrm{F}}\label{eq:false_alarm}
\end{aligned}
\end{align}
where step $(a)$ results from a union bound. Due to the fact that the rewards after $\tau_{l-1}$ are independent across time-steps and arms given the past event $\mathcal{E}_{l-1}$, we can change the measure to $\Pr_{\infty}$ in step $(b)$, as there are no changes between $\tau_{l-1}$ and $\nu_{l}$. In addition, because $\lb n_{a} \lp \nu_{l} - 1 \rp \rb \subseteq \lb T \rb$, the event $\lbp \tau_{a,l} \in \lb n_{a} \lp \nu_{l} - 1 \rp \rb \rbp \subseteq \lbp \tau_{a,l} \leq T \rbp$. Since the samples are i.i.d. sub-Gaussian, we can apply the false alarm constraint \eqref{eq:PFbound} in step $(c)$.

$\bullet$ \emph{Upper Bounding $\Phi_2$:} Recall that $n_{a} \lp t \rp$ is the number of time-steps between $\tau_{l-1}+1$ and $t$ at which $\lp t - \tau_{l-1} -1 \mod \lce A/\alpha_{l} \rce \rp + 1=a$ (see \eqref{eq:pull}). Then, given $\mathcal{E}_{l-1}$, for any $a \in [A]$ and $i, j > \tau_{l-1}$ where $i < j$, we observe that due to forced exploration,
\begin{align}
\begin{aligned}
    n_{a} \lp j \rp - n_{a} \lp i \rp &= \sum_{t=i+1}^{j}\mathds{1}\lbp\lp t-\tau_{k}-1\mod\lce\frac{A}{\alpha_l}\rce\rp+1=a\rbp\geq\lfl\frac{j-i}{\lce A/\alpha_{l}\rce}\rfl\label{eq:explore}.
\end{aligned}
\end{align}
We observe that $\nu_{l} - \nu_{l - 1} \geq d_{l - 1} + m_{l}$ for any $l \in [N_{T}]$ due to Condition \eqref{eq:cond1}. Then, given $\mcal{E}_{l-1}$, due to the fact that $\tau_{l-1}\leq\nu_{l-1}+d_{l-1} - 1$,
\begin{equation}
    \nu_{l} - 1 \geq \nu_{l-1} + d_{l-1} + m_{l} - 1 \geq \tau_{l-1} + m_{l} > \tau_{l-1}.
\end{equation}
Then, from \eqref{eq:explore} it follows that:
\begin{align}
    n_{a} \lp \nu_{l} + d_{l} - 1 \rp - n_{a} \lp \nu_{l} - 1 \rp \geq \lfl \frac{d_{l}}{\lce A / \alpha_{l} \rce} \rfl = d \lp \underline{\Delta}_{\mrm{c}} \rp \geq d \lp \Delta_{\mrm{c},l} \rp\label{eq:enough_post_change_1}
\end{align}
where  $\Delta_{\mrm{c},l}$ is the change-gap at change-point $\nu_{l}$, and $\underline{\Delta}_{c}$ is the minimum change-gap over all change-points.
Furthermore, we observe that since $\nu_{l} - \nu_{l - 1} \geq d_{l - 1} + m_{l}$  due to Condition \eqref{eq:cond1}, 
\begin{align}
    \nu_{l} - 1 -\tau_{l-1}& \geq \nu_{l}-\nu_{l-1}-d_{l-1} \geq m_{l}\label{eq:enough_pre_change}
\end{align}
due to the fact that $\tau_{l-1}\leq\nu_{l-1}+d_{l-1} - 1$ given $\mcal{E}_{l-1}$. Hence, we can show that given $\mcal{E}_{l-1}$, for any $a \in [A]$,
\begin{align}\nonumber
    n_{a}\lp\nu_{l} - 1\rp &= n_{a}\lp\nu_{l} - 1\rp-n_{a}\lp\tau_{l-1}\rp\\
    &\overset{(a)}{\geq}\lfl\frac{\nu_{l} - 1 - \tau_{l-1}}{\lce A/\alpha_{l}\rce}\rfl\overset{(b)}{\geq}\lfl\frac{m_{l}}{\lce A/\alpha_{l}\rce}\rfl =m\lp\underline{\Delta}_{\mrm{c}}\rp\geq m\lp\Delta_{\mrm{c},l}\rp\label{eq:enough_pre_change_2}
\end{align}
where step $(a)$ results from \eqref{eq:explore} and step $(b)$ results from \eqref{eq:enough_pre_change}. Furthermore, without loss of generality, we can assume that $\nu_{l} \leq T - d_{l}$. Otherwise, there is no need to detect the change because the horizon will end soon after the change occurs. In the case where $\nu_{l} > T - d_{l}$, the regret for not detecting the change $\nu_{l}$ is at most $C d_{l}$, which is also incurred when the change is detected within the latency later in our analysis. Therefore, we have
\begin{align}
    n_{a} \lp \nu_{l} - 1 \rp &\leq \nu_{l} - 1 < T  - d_{l} < T - d_{l} \leq T - \lce \frac{A}{\alpha_{l}} \rce d \lp \Delta_{\mrm{c}, l} \rp \leq T - d \lp \Delta_{\mrm{c}, l} \rp.\label{eq:pre_window_dab}
\end{align}
We define $a_{\mrm{c}, l}$ to be the arm that changes the most at the $l^{th}$ change-point, i.e., $a_{\mrm{c}, l} \coloneqq \argmax_{a = 1, \dots, A}\; \lba \mu_{a, l + 1}-\mu_{a, l} \rba$ for each $l \in [N_{T}]$. By \eqref{eq:enough_pre_change_2} and \eqref{eq:pre_window_dab}, $n_{a_{\mrm{c},l}}\lp \nu_{l} - 1 \rp\in\lbp m\lp\Delta_{\mrm{c},l}\rp+1,\dots,T-d\lp\Delta_{\mrm{c},l}\rp\rbp$ given $\mcal{E}_{l-1}$. Then, we have:
\begin{align}\nonumber
    &\Pr\lp\tau_{l} > \nu_{l} + d_{l} - 1 | \mcal{G}_{l-1} \rp\\\nonumber
    &=\Pr\lp\tau_{l} > \nu_{l} + d_{l} - 1, n_{a_{\mrm{c},l}}\lp\nu_{l} - 1\rp \in \lbp m\lp\Delta_{\mrm{c},l}\rp+1,\dots,T-d\lp\Delta_{\mrm{c},l}\rp\rbp | \mcal{E}_{l-1} \rp\\\nonumber
    &\leq\Pr\lp\forall\,a\in[A],\;\tau_{a,l} > n_{a}\lp\nu_{l} + d_{l} - 1 \rp, n_{a_{\mrm{c},l}}\lp\nu_{l} - 1\rp \in \lbp m\lp\Delta_{\mrm{c},l}\rp+1,\dots,T-d\lp\Delta_{\mrm{c},l}\rp\rbp |\mcal{E}_{l-1} \rp\\\nonumber
    &\overset{(a)}{\leq}\Pr\lp\tau_{a_{\mrm{c},l},l} > n_{a_{\mrm{c},l}}\lp\nu_{l} + d_{l} - 1 \rp, n_{a_{\mrm{c},l}}\lp\nu_{l} - 1\rp \in \lbp m\lp\Delta_{\mrm{c},l}\rp+1,\dots,T-d\lp\Delta_{\mrm{c},l}\rp\rbp |\mcal{E}_{l-1} \rp\\\nonumber
    &\overset{(b)}{\leq} \Pr\lp\tau_{a_{\mrm{c},l},l} > n_{a_{\mrm{c},l}}\lp\nu_{l} - 1 \rp+d\lp\Delta_{\mrm{c},l}\rp, n_{a_{\mrm{c},l}}\lp\nu_{l} - 1\rp \in \lbp m\lp\Delta_{\mrm{c},l}\rp+1,\dots,T-d\lp\Delta_{\mrm{c},l}\rp\rbp |\mcal{E}_{l-1}\rp \\\nonumber
    &=\Pr\lp n_{a_{\mrm{c},l}}\lp\nu_{l} - 1\rp\in\lbp m\lp\Delta_{\mrm{c},l}\rp+1,\dots,T-d\lp\Delta_{\mrm{c},l}\rp\rbp |\mcal{E}_{l-1}\rp\\\nonumber
    &\quad\cdot\Pr\lp\tau_{a_{\mrm{c},l},l}>n_{a_{\mrm{c},l}}\lp\nu_{l} - 1\rp+d\lp\Delta_{\mrm{c},l}\rp | n_{a_{\mrm{c},l}}\lp\nu_{l} - 1\rp \in\lbp m\lp\Delta_{\mrm{c},l}\rp+1,\dots,T-d\lp\Delta_{\mrm{c},l}\rp\rbp,\mcal{E}_{l-1}\rp\\\nonumber
    &\leq \Pr\lp\tau_{a_{\mrm{c},l},l}>n_{a_{\mrm{c},l}}\lp\nu_{l} - 1\rp+d\lp\Delta_{\mrm{c},l}\rp | n_{a_{\mrm{c},l}}\lp\nu_{l} - 1\rp\in \lbp m\lp\Delta_{\mrm{c},l}\rp+1,\dots,T-d\lp\Delta_{\mrm{c},l}\rp\rbp,\mcal{E}_{l-1}\rp\\\nonumber
    &\overset{(c)}{=}\Pr_{\nu}\lp\tau\geq\nu+d\lp\Delta_{\mrm{c},l}\rp\rp\quad\mrm{for \:some\:}\nu\in\lbp m\lp\Delta_{\mrm{c},l}\rp+1,\dots,T-d\lp\Delta_{\mrm{c},l}\rp\rbp\\
    &\overset{(d)}{\leq}\delta_{\mrm{D}} \label{eq:late_detect}
\end{align}
where step $(a)$ comes from the fact that $\lbp a_{\mrm{c},k}\rbp\subseteq[A]$, and step $(b)$ stems from \eqref{eq:enough_post_change_1}. Because $n_{a_{\mrm{c}},l} \lp \nu_{l} - 1 \rp$ is a measurable function of $\tau_{l-1}$, and that the rewards after $\tau_{l-1}$ are independent across time-steps and arms given the $\tau_{l-1}$-mearuable event $\mcal{E}_{l-1}$ and $\lbp n_{a_{\mrm{c},l}}\lp\nu_{l} - 1\rp\in \lbp m\lp\Delta_{\mrm{c},l}\rp+1,\dots,T-d\lp\Delta_{\mrm{c},l}\rp\rbp \rbp$, we can change the measure to $\Pr_{\nu}$ for some $\nu\in\lbp m\lp\Delta_{\mrm{c},l}\rp+1,\dots,T-d\lp\Delta_{\mrm{c},l}\rp\rbp$ in step $(c)$, as there are no changes between $\tau_{l-1}$ and $\nu_{l}$. In addition, because $\lb n_{a} \lp \nu_{k} - 1 \rp \rb \subseteq \lb T \rb$, the event $\lbp \tau_{a,l} \in \lb n_{a} \lp \nu_{l} - 1 \rp \rb \rbp \subseteq \lbp \tau_{a,l} \leq T \rbp$.
Step $(d)$ is due to definition of the latency $d$. This completes bounding $\Phi_1$ and $\Phi_2$.

Plugging \eqref{eq:false_alarm} and \eqref{eq:late_detect} into \eqref{eq:bad_union_thm1}, we obtain
\begin{equation}
\Pr\lbp\mcal{G}^{c}_{k}\rbp\leq Ak\delta_{\mrm{F}}+ \lp k-1 \rp\delta_{\mrm{D}}\label{eq:bad_event}.
\end{equation}
This bounds the first term in \eqref{eq:reg_decomp}. To bound the second term, recall that $\bar{\alpha} \coloneqq \max_{k = 1, \dots, N_{T} + 1} \alpha_{k}$ and that $H_{\mcal{B}}$ is the stationary bandit history list and that $\nu_{N_{T} + 1} \coloneqq T + 1$. For any $k\in\lb N_{T} + 1\rb$, if $\lp t-\tau_{k-1}-1\mod\lce A/\alpha_{k}\rce\rp+1\notin[A]$, then $a_t=\mcal{B}\lp H_{\mcal{B}}\rp$, where $\mcal{B}\lp H_{\mcal{B}}\rp$ denotes the action determined by the stationary bandit algorithm $\mcal{B}$ with history $H_{\mcal{B}}$. Thus, the second term in \eqref{eq:reg_decomp} can then be decomposed as follows:
\begin{align}
    &\E\lb \Indc{\mcal{G}_{k}}\sum_{t=\nu_{k-1}}^{\nu_{k}-1}\Delta_{a_t, k}\rb \nonumber \\
    &\overset{(a)}{\leq} Cd_{k-1}+\lce\frac{\nu_{k}-\nu_{k-1}}{\lce A/\alpha_{k}\rce}\rce AC + \E\lb\Indc{\mcal{G}_{k}}\sum_{t=\tau_{k-1} + 1}^{\nu_{k}-1}\Delta_{a_t, k}\Indc{\lp t-\tau_{k-1}-1 \rp\mod\lce \frac{A}{\alpha_{k}}\rce \geq A}\rb \nonumber \\
    &\overset{(b)}{\leq} C d_{k-1} + C \lb \alpha_{k} \lp \nu_{k}-\nu_{k-1}\rp + A \rb + R_{\mcal{B}}\lp \nu_{k}-\nu_{k-1}\rp \nonumber \\
    &\leq C d_{k-1} + C \lb \bar{\alpha} \lp \nu_{k}-\nu_{k-1}\rp + A \rb +R_{\mcal{B}} \lp \nu_{k}-\nu_{k-1} \rp
    \label{eq:bound_inter}
\end{align}
where in step $(a)$, the first term bounds the regret due to the delay of the change detector, and the second term bounds the regret incurred due to forced exploration. In step $(b)$, as the reward samples in $\mcal{H}_{\mcal{B}}$ are independent of those in $\cup_{a\in[A]}\mcal{H}_{a,\mcal{D}}$, and that $\mcal{G}_{k}$ only depends on samples in $\cup_{a\in[A]}\mcal{H}_{a,\mcal{D}}$,  the regret bound of the stationary bandit    
\footnote{It is worth noting what would happen if we allow the stationary bandit algorithm to have access to samples obtained from forced exploration as is done in \cite{besson2022efficient}. Suppose $\tilde{R}_{\mcal{B}}$ is the regret of the stationary bandit that uses the forced exploration samples, then the second and third terms in the bound in step (a)  will be replaced by $\sum_{k=1}^{N_{T} + 1}\tilde{R}_{\mcal{B}}\lp \nu_{k}-\nu_{k-1}-d_{k-1}\rp$, which is $\geq \sum_{k=1}^{N_{T} + 1}R_{\mcal{B}}\lp \nu_{k}-\nu_{k-1}-d_{k-1}\rp$ since the forced exploration is a suboptimal way to pull the arms for the stationary bandit. This would make it difficult to upper bound the bandit regret in a manner similar to step (b) and the regret bound will no longer be modular.} can be applied to the third term.   
We also apply Property \ref{proper:regret}, utilizing the fact that $R_{\mcal{B}}\lp T \rp$ is increasing with $T$. Then, we can plug \eqref{eq:bound_inter} and \eqref{eq:bad_event} into \eqref{eq:reg_decomp} and obtain:
\begin{align}
&R_{T} \nonumber\\
&\leq \sum_{k = 1}^{N_{T} + 1}
    C \lp \nu_{k} - \nu_{k-1} \rp \lp Ak\delta_{\mrm{F}}+ \lp k-1 \rp\delta_{\mrm{D}} \rp  + \sum_{k = 1}^{N_{T} + 1} \lp C d_{k-1} + C \lb \bar{\alpha} \lp \nu_{k}-\nu_{k-1}\rp + A \rb +R_{\mcal{B}} \lp \nu_{k}-\nu_{k-1} \rp \rp \nonumber\\
    &\leq \sum_{k = 1}^{N_{T} + 1}
    C \lp \nu_{k} - \nu_{k-1} \rp \lp A\lp N_{T}+1 \rp\delta_{\mrm{F}}+ N_{T}\delta_{\mrm{D}} \rp + \sum_{k = 1}^{N_{T} + 1} \lp C d_{k-1} + C \lb \bar{\alpha} \lp \nu_{k}-\nu_{k-1}\rp + A \rb +R_{\mcal{B}} \lp \nu_{k}-\nu_{k-1} \rp \rp \nonumber\\
    &=C T A\lp N_{T} + 1\rp\delta_{\mrm{F}} + C T N_{T} \delta_{\mrm{D}} + C \sum_{k=1}^{N_{T}} d_{k} + C\lb \bar{\alpha} T + \lp N_T + 1 \rp A \rb + \sum_{k = 1}^{N_{T} + 1} R_{\mcal{B}} \lp \nu_{k}-\nu_{k-1} \rp \nonumber\\
    &\overset{(a)}{\leq} C T A\lp N_{T} + 1\rp\delta_{\mrm{F}} + C T N_{T} \delta_{\mrm{D}} + C \sum_{k=1}^{N_{T}} d_{k} + C\lb \bar{\alpha} T + \lp N_T + 1 \rp A \rb + \lp N_{T} + 1 \rp R_{\mcal{B}}\lp \frac{T}{N_{T}+1} \rp.
\end{align}
In step $(a)$, we apply Property \ref{proper:regret}, applying Jensen's inequality to the concave function $R_{\mcal{B}}$. This concludes the proof of Theorem \ref{thm:det_reg}.

\end{document}